\newcommand{\Cgreedy}{DarkGoldenrod}
\newcommand{\Ccyc}{Crimson}
\newcommand{\Cma}{Green}
\newcommand{\Cexit}{Fuchsia}
\newcommand{\Center}{MediumBlue}
\newcommand{\Cexternal}{Brown}
\definecolor{verydarkgray}{gray}{0.45}
\newcommand{\Cdead}{verydarkgray}
\definecolor{darkgray}{gray}{0.6}
\definecolor{commentcolor}{rgb}{0.4, 0.22, 0.33}
\algnewcommand{\parState}[1]{\State%
    \parbox[t]{\dimexpr\linewidth-\algmargin}{\strut\hangindent=\algorithmicindent \hangafter=1 #1\strut}}
\algrenewcommand\algorithmicindent{1.0em}%
\newcommand{\algorithmicdowhile}{\textbf{do}:}
\newcommand{\algorithmicfunc}[1]{\textbf{def} #1 :}
\newif\ifboldnumber
\algrenewcommand\alglinenumber[1]{%
  \footnotesize\ifboldnumber\color{red}\bfseries\fi\global\boldnumberfalse#1:}
\newcommand{\rightcomment}[1]{{\color{commentcolor} \(\triangleright\) {\footnotesize\textit{#1}}}}
\algrenewcommand{\algorithmiccomment}[1]{\hfill \rightcomment{#1}}  
\algnewcommand{\LineComment}[1]{\State \rightcomment{#1}}
\algnewcommand{\LinesComment}[1]{\State \rightcomment{\parbox[t]{\linewidth-\leftmargin-\widthof{\(\triangleright\) }}{#1}}}
\renewcommand\algorithmicthen{:}
\algnewcommand{\IIf}[1]{\State\algorithmicif\ #1\ \algorithmicthen}
\algnewcommand{\EndIIf}{\unskip}
\newcommand{\constrain}[2]{\mathrm{constrain}_{#2}(#1)}
\newcommand{\stitch}[2]{#1 \looparrowright #2}
\newcommand{\cg}{{C}}
\newcommand{\cc}{{c}}
\newcommand{\critical}{{critical cycle}\xspace}
\newcommand{\criticals}{{critical cycles}\xspace}
\newcommand{\prov}[0]{\pi}
\newcommand{\arbsk}[3]{{\mathcal{A}_{#1}^{#2}(#3)}}
\newcommand{\arbsone}[2]{\arbsk{#1}{\dagger}{#2}}
\newcommand{\Wenter}{{\color{\Center} \textbf{enter}}\xspace}
\newcommand{\Wexit}{{\color{\Cexit} \textbf{exit}}\xspace}
\newcommand{\Wexternal}{{\color{\Cexternal} \textbf{external}}\xspace}
\newcommand{\Wdead}{{\color{\Cdead} \textbf{dead}}\xspace}
\newcommand{\entersite}{{entrance site}\xspace}
\newcommand{\dottedarrow}[0]{\tikz[>=latex,baseline=-0.5ex]{ \path[->] (0,0) edge[dotted, thick] (0.6,0);}\xspace}
\newcommand{\dashedarrow}[0]{\tikz[>=latex,baseline=-0.5ex, \Cdead]{ \path[->] (0,0) edge[dashed, thick] (0.6,0);}\xspace}
\newcommand{\rarrow}[1]{\xrightarrow{\raisebox{-0.5ex}[0ex][0ex]{\tiny $#1$}}}
\newcommand{\edge}[3]{(#1\!\rarrow{{#2}}\!#3)}
\newcommand{\tree}{{A}}
\newcommand{\treep}{{\tree'}}
\newcommand{\arbs}[2]{{\mathcal{A}_{#1}(#2)}}
\renewcommand{\root}{\rho}
\newcommand{\bigo}[1]{\mathcal{O}(#1)}
\newcommand{\cle}{CLE\xspace}
\newcommand{\abs}[1]{\lvert #1 \rvert}
\newcommand{\treecost}[1]{\bar{w}(#1)}
\newcommand{\greedy}[2]{\overrightarrow{#1^{#2}}}
\newcommand{\mst}[2]{{#1^{*}_{#2}}}
\newcommand{\opt}[2]{\mathrm{opt}_{#2}\!\left({#1}\right)}
\newcommand{\contract}[2]{{#1}_{\!/{#2}}}
\newcommand{\bestDep}[1]{{{#1}^\dagger}}
\newcommand{\oset}[3][0ex]{%
  \mathrel{\mathop{#3}\limits^{
    \vbox to#1{\kern-2\ex@
    \hbox{$\scriptscriptstyle#2$}\vss}}}}
\newcommand{\msetminus}[0]{{\backslash\!\!\backslash}}
\newcommand{\multidelete}[2]{#1 \msetminus #2}
\newcommand{\ar}{arborescence\xspace}
\newcommand{\ars}{arborescences\xspace}
\newcommand{\Ars}{Arborescences\xspace}
\newcommand{\defeq}[0]{\overset{\smaller\mathrm{def}}{=}}
\newcommand{\defn}[1]{\textbf{#1}}
\renewcommand{\setminus}{\smallsetminus}
\renewcommand{\bar}[1]{\overline{#1}}
\DeclareMathOperator*{\argmax}{argmax\,}
\newcommand*\nodeid[1]{\tikz[baseline]{
            \node[shape=circle,draw,inner sep=1pt, text centered, text depth=1mm] () at (0,0.05) {\tiny $#1$};}}
\newcommand*\nodeId[1]{\tikz[baseline]{
            \node[shape=circle,draw,inner sep=1pt, text centered, text depth=0.2mm] () at (0,0.07) {\tiny $#1$};}}
\newcommand*\noderoot{\tikz[baseline]{
            \node[shape=circle,draw,inner sep=0pt, text centered, text depth=2mm, minimum size=0pt] () at (0,0.1) {\tiny $\root$};}}
\newcommand{\figrefmarker}{\workedexample}
\newcommand{\figref}{$^{\figrefmarker}$\xspace}
\let\rightarrow\chemarrow 
\newtheorem{thm}{Theorem}
\newtheorem{cor}{Corollary}
\newtheorem{lemma}{Lemma}
\newtheorem{prop}{Proposition}
\theoremstyle{definition}
\newenvironment{customthm}[1]
  {\innercustomthm}
  {\endinnercustomthm}
\crefname{section}{\S}{\S\S}
\Crefname{section}{\S}{\S\S}
\crefname{table}{Tab.}{}
\crefname{figure}{Fig.}{}
\crefname{algorithm}{Alg}{}
\crefname{algorithm}{Alg}{}
\crefname{line}{Line}{}
\crefname{appendix}{App.}{}
\crefname{thm}{Theorem}{}
\crefname{prop}{Proposition}{}
\crefname{defin}{Definition}{}
\crefname{lemma}{Lemma}{}
\crefname{cor}{Corollary}{}
\crefname{equation}{}{}
\g@addto@macro\normalsize{%
  \setlength\abovedisplayskip{5pt}
  \setlength\belowdisplayskip{5pt}
  \setlength\abovedisplayshortskip{5pt}
  \setlength\belowdisplayshortskip{5pt}
}
\newcommand{\workedexample}{\emoji[twitter]{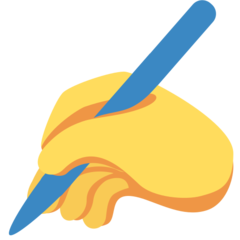}\xspace}
\title{Please Mind the Root: Decoding \Ars for Dependency Parsing}
\newcommand{\ucambridge}{\emoji[twitter]{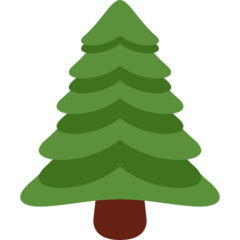}}
\newcommand{\ethz}{\emoji[twitter]{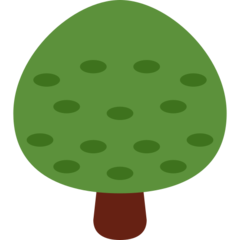}}
\newcommand{\jhu}{\emoji[twitter]{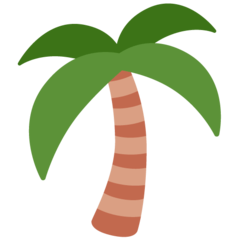}}
\author{
{Ran Zmigrod\raise1.0ex\hbox{\normalfont\ucambridge}\raise1.0ex\hbox{\normalfont}}~\;~Tim Vieira\raise1.0ex\hbox{\normalfont\jhu}~\;~Ryan Cotterell\raise1.0ex\hbox{\normalfont\ucambridge,\ethz}
\\
  \raise1.0ex\hbox{\normalfont\ucambridge}University of Cambridge~\;~\raise1.0ex\hbox{\normalfont\jhu}Johns Hopkins University~\;~\raise1.0ex\hbox{\normalfont\ethz}ETH Z\"{u}rich \\
  \texttt{rz279@cam.ac.uk}~\;~\texttt{tim.f.vieira@gmail.com} \\ \texttt{ryan.cotterell@inf.ethz.ch}
}
\date{}
\begin{document}
\maketitle
\begin{abstract}
The connection between dependency trees and spanning trees is exploited by the NLP community to train and to decode graph-based dependency parsers.
However, the NLP literature has missed an important difference between the two structures: only \emph{one} edge may emanate from the root in a dependency tree.
We analyzed the output of state-of-the-art parsers on many languages from the Universal Dependency Treebank: although these parsers are often able to learn that trees which violate the constraint should be assigned lower probabilities, their ability to do so unsurprisingly degrades as the size of the training set decreases.
In fact, the worst constraint-violation rate we observe is $24\%$.
Prior work has proposed an inefficient algorithm to enforce the constraint, which adds a factor of $n$ to the decoding runtime.  
We adapt an algorithm due to \citet{GabowT84} to dependency parsing, which satisfies the constraint without compromising the original runtime.\footnote{Our Python library is available at \url{https://github.com/rycolab/spanningtrees}.}
\end{abstract}

\section{Introduction}
Developing probabilistic models of dependency trees requires efficient exploration over a set of possible dependency trees, which grows exponentially with the length of the input sentence $n$.

Under an edge-factored model \citep{mcdonald-etal-2005-non,ma,dozat}, 
finding the maximum-a-posteriori dependency tree is equivalent to finding 
the maximum weight spanning tree in a weighted directed graph.
More precisely, spanning trees in \emph{directed} graphs are known as \ars. 
The maximum-weight
\ar can be found in $\bigo{n^2}$ \citep{Tarjan77, camerini1979note}.\footnote{Several authors (e.g., \citet{stanza,mcdonald-etal-2005-non}) opt for the simpler \cle algorithm \citep{chu1965shortest, bock1971algorithm, edmonds1967optimum}, which has a worst-case bound of $\bigo{n^3}$, but is often fast in practice.}

However, an oversight in the relationship between dependency trees and \ars has gone largely unnoticed in the dependency parsing literature.
Most dependency annotation standards enforce a \defn{root constraint}: Exactly one edge may emanate from the root node.\footnote{A notable exception is the Prague Dependency Treebank \cite{prague_dep}, which allows for multi-rooted trees.}
For example, the Universal Dependency Treebank (UD; \citet{ud}), a large-scale multilingual syntactic annotation effort, states in their documentation \citep{udroot}:
\begin{quote}\small
    There should be just one node with the root dependency relation in every tree.
\end{quote}

\noindent This oversight implies that parsers may return \emph{malformed} dependency trees.
Indeed, we examined the output of a state-of-the-art parser \citep{stanza} for $63$ UD treebanks.
We saw that decoding without a root constraint 
resulted in $1.80\%$ (on average) of the decoded dependency trees being malformed.
This increased to $6.21\%$ on languages that contain less than one thousand training instances with the worst case of $24\%$ on Kurmanji.

The NLP literature has proposed two solutions to enforce the root constraint: (1) Allow invalid dependency trees---hoping that the model can learn to assign them low probabilities and decode singly rooted trees,
or (2) return the best of $n$ runs of the \cle each with a fixed edge emanating from the root \citep{dozat-etal-2017-stanfords}.\footnote{In practice, if constraint violations are infrequent, this strategy should be used as a fallback for when the \emph{unconstrained} solution fails. However, this will not necessarily be the case, and is rarely the case during model training.}
The first solution is clearly problematic as it may allow parsers to predict malformed dependency trees. 
This issue is further swept under the rug with ``forgiving'' evaluation metrics, such as attachment scores, which give partial credit for malformed output.\footnote{We note exact match metrics, which consider the entire \ar, do penalize root constraint violations}
The second solution, while correct, adds an unnecessary factor of $n$ to the runtime of root-constrained decoding.

In this paper, we identify a much more efficient solution than (2).
We do so by unearthing an $\bigo{n^2}$ algorithm due to \citet{GabowT84} from the theoretical computer science literature.
This algorithm appears to have gone unnoticed in NLP literature;\footnote{There is one exception: \citet{corro-etal-2016-dependency} mention \citet{GabowT84}'s algorithm in a footnote.} we adapt the algorithm to correctly and efficiently handle the root constraint during decoding in edge-factored non-projective dependency parsing.\footnote{Much like this paper, efficient root-constrained marginal inference is also possible without picking up an extra factor of $n$, but it requires some attention to detail \citep{koo-et-al-2007, zmigrod-et-al-expectation}.}\looseness=-1

\begin{figure}[t]
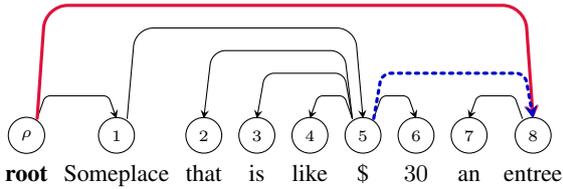

    \begin{dependency}[hide label]
    \begin{deptext}[column sep=0.1em, row sep=0ex]
        \noderoot \& \nodeid{1} \& \nodeid{2} \& \nodeid{3} \& \nodeid{4} \& \nodeid{5} \& \nodeid{6} \& \nodeid{7} \& \nodeid{8} \\ 
        \footnotesize \bf root \& \footnotesize Someplace \& \footnotesize that \& \footnotesize is \& \footnotesize like \& \footnotesize \$ \& \footnotesize $30$ \& \footnotesize an \& \footnotesize entree \& \\
    \end{deptext}
    \depedge[edge height=0.3cm]{1}{2}{}
	\depedge[edge height=0.9cm]{6}{3}{}
	\depedge[edge height=0.6cm]{6}{4}{}
	\depedge[edge height=0.3cm]{6}{5}{}
	\depedge[edge height=1.2cm]{2}{6}{}
	\depedge[edge height=0.3cm]{6}{7}{}
	\depedge[edge height=0.3cm]{9}{8}{}
	\depedge[edge height=1.5cm, edge style={Crimson, very thick}]{1}{9}{}
	\depedge[edge height=0.6cm, edge style={MediumBlue, very thick, dotted}]{6}{9}{}
    \end{dependency}
    \caption{A malformed dependency tree from our experiment. Shown are the incorrect ({\color{Crimson} highlighted}) 
    and correct ({\color{MediumBlue} highlighted})
    dependency relations for token 8.
    }
    \label{fig:deptree}
\end{figure}

\begin{figure*}[t]
\centering

\begin{tikzpicture}[baseline]
    \begin{scope}[every node/.style={circle,draw, inner sep=1.5pt}]
    \node (r) at (1.3, 2.08) {\tiny $\root$};
    \node (1) at (1.1, 0.23) {\tiny $1$};
    \node (2) at (3, 1.5) {\tiny $2$};
    \node (3) at (3, 0.46) {\tiny $3$};
    \node (4) at (2.1, 0.98) {\tiny $4$};
    \end{scope}
    \begin{scope}[>=latex,
              every node/.style={fill=white, inner sep=0.5pt},
              every edge/.style={draw}]
    \path [->] (r) edge[very thick, \Cgreedy] node {\tiny $90$} (1);
    \path [->] (r) edge[bend left=20] node {\tiny $40$} (2);
    \path [->] (1) edge[bend right=25] node {\tiny $10$} (3);
    \path [->] (2) edge[very thick, \Cgreedy, bend right=50] node {\tiny $60$} (4);
    \path [->] (2) edge[] node {\tiny $30$} (3);
    \path [->] (3) edge[very thick, \Cgreedy, bend right=50] node {\tiny $50$} (2);
    \path [->] (4) edge[very thick, \Cgreedy, bend right=40] node {\tiny $70$} (3);
    \path [->] (4) edge[] node {\tiny $20$} (1);
\end{scope}
\end{tikzpicture}
\begin{tikzpicture}[baseline]
    \node[anchor=west, minimum size=0pt, inner sep=0pt] at (-0.05, 1.3) {\footnotesize \emph{(a)}};
    \node[draw, single arrow,
            minimum height=3mm, minimum width=2mm,
            single arrow head extend=0.8mm, fill=black,anchor=west, inner sep=2pt] at (0, .98) {};
\end{tikzpicture}
\begin{tikzpicture}[baseline]
    
    \begin{scope}[every node/.style={circle,draw, inner sep=1.5pt}]
    \node (r) at (1.3, 2.08) {\tiny $\root$};
    \node (1) at (1.1, 0.23) {\tiny $1$};
    \node (2) at (3, 1.5) {\tiny $2$};
    \node (3) at (3, 0.46) {\tiny $3$};
    \node (4) at (2.1, 0.98) {\tiny $4$};
    \end{scope}
    
    \node[circle, thick, draw, minimum size=2cm, \Ccyc] (c) at (2.7, 0.98) {\small$\cc$};
    \node[draw=none, minimum size=0, inner sep=0] (cr) at (2.3, 1.9) {};
    \node[draw=none, minimum size=0, inner sep=0] (c41) at (1.86, 0.43) {};
    \node[draw=none, minimum size=0, inner sep=0] (c13) at (2.5, 0.0) {};
    \node[draw=none, minimum size=0] (c3) at (1.75, 1.43) {};
    \begin{scope}[>=latex,
              every node/.style={fill=white, inner sep=0.5pt},
              every edge/.style={draw}]
    \path [->] (2) edge[\Cdead, bend right=50] node {\tiny $60$} (4);
    \path [->] (4) edge[\Cdead, bend right=40] node {\tiny $70$} (3);
    \path [->] (3) edge[\Cdead, bend right=50] node {\tiny $50$} (2);
    \path [->] (2) edge[\Cdead] node {\tiny $30$} (3);
    \path[->] (r) edge[\Center] node {\tiny $170$} (cr);
    \path [->] (cr) edge[\Center, dotted] (2);
    \path [->] (4) edge[\Cexit, dotted] (c41);
    \path [->] (c41) edge[\Cexit] node {\tiny $20$} (1);
    \path [->] (r) edge[\Cexternal] node {\tiny $90$} (1);
    \path [->] (1) edge[\Center] node {\tiny $120$} (c13);
    \path [->] (c13) edge[\Center, dotted] (3);
    \end{scope}
    
\end{tikzpicture}
\begin{tikzpicture}[baseline]
    \node[anchor=west, minimum size=0pt, inner sep=0pt] at (-0.05, 1.3) {\footnotesize \emph{(b)}};
    \node[draw, single arrow,
            minimum height=3mm, minimum width=2mm,
            single arrow head extend=0.8mm, fill=black,
            anchor=west, inner sep=2pt] at (0, .98) {};
\end{tikzpicture}
\begin{tikzpicture}[baseline]
    
    \begin{scope}[every node/.style={circle,draw, inner sep=1.5pt}]
    \node (r) at (1.3, 2.08) {\tiny $\root$};
    \node (1) at (1.1, 0.23) {\tiny $1$};
    \node (2) at (3, 1.5) {\tiny $2$};
    \node (3) at (3, 0.46) {\tiny $3$};
    \node (4) at (2.1, 0.98) {\tiny $4$};
    \end{scope}
    
    \node[circle, thick, draw, minimum size=2cm, \Ccyc] (c) at (2.7, 0.98) {\small$\cc$};
    \node[draw=none, minimum size=0, inner sep=0] (cr) at (2.3, 1.9) {};
    \node[draw=none, minimum size=0, inner sep=0] (c41) at (1.86, 0.43) {};
    \node[draw=none, minimum size=0, inner sep=0] (c13) at (2.5, 0.0) {};
    \node[draw=none, minimum size=0] (c3) at (1.75, 1.43) {};
    \begin{scope}[>=latex,
              every node/.style={fill=white, inner sep=0.5pt},
              every edge/.style={draw}]
    \path [->] (2) edge[bend right=50] node {\tiny $60$} (4);
    \path [->] (4) edge[bend right=40] node {\tiny $70$} (3);
    \path [->] (3) edge[bend right=50] node {\tiny $50$} (2);
    \path [->] (2) edge[] node {\tiny $30$} (3);
    \path[->] (r) edge[very thick, \Cma] node {\tiny $170$} (cr);
    \path [->] (cr) edge[dotted] (2);
    \path [->] (4) edge[dotted] (c41);
    \path [->] (c41) edge[] node {\tiny $20$} (1);
    \path [->] (r) edge[very thick, \Cma] node {\tiny $90$} (1);
    \path [->] (1) edge[] node {\tiny $120$} (c13);
    \path [->] (c13) edge[dotted] (3);
    \end{scope}
\end{tikzpicture}
\begin{tikzpicture}[baseline]
    \node[anchor=west, minimum size=0pt, inner sep=0pt] at (-0.05, 1.3) {\footnotesize \emph{(c)}};
    \node[draw, single arrow,
            minimum height=3mm, minimum width=2mm,
            single arrow head extend=0.8mm, fill=black,
            anchor=west, inner sep=2pt] at (0, .98) {};
\end{tikzpicture}
\begin{tikzpicture}[baseline]
    
    \begin{scope}[every node/.style={circle,draw, inner sep=1.5pt}]
    \node (r) at (1.3, 2.08) {\tiny $\root$};
    \node (1) at (1.1, 0.23) {\tiny $1$};
    \node (2) at (3, 1.5) {\tiny $2$};
    \node (3) at (3, 0.46) {\tiny $3$};
    \node (4) at (2.1, 0.98) {\tiny $4$};
    \end{scope}
    
    \node[circle,thick, draw, minimum size=2cm, \Ccyc] (c) at (2.7, 0.98) {\small$\cc$};
    \node[draw=none, minimum size=0, inner sep=0] (cr) at (2.3, 1.9) {};
    \node[draw=none, minimum size=0, inner sep=0] (c41) at (1.86, 0.43) {};
    \node[draw=none, minimum size=0, inner sep=0] (c13) at (2.5, 0.0) {};
    \node[draw=none, minimum size=0] (c3) at (1.75, 1.43) {};
    \begin{scope}[>=latex,
              every node/.style={fill=white, inner sep=0.5pt},
              every edge/.style={draw}]
    \path [->] (2) edge[bend right=50] node {\tiny $60$} (4);
    \path [->] (4) edge[bend right=40] node {\tiny $70$} (3);
    \path [->] (3) edge[bend right=50] node {\tiny $50$} (2);
    \path [->] (2) edge[] node {\tiny $30$} (3);
    \path[->] (r) edge[very thin, \Cdead, dashed] node {\tiny $170$} (cr);
    \path [->] (cr) edge[very thin, \Cdead, dashed] (2);
    \path [->] (4) edge[dotted] (c41);
    \path [->] (c41) edge[] node {\tiny $20$} (1);
    \path [->] (r) edge[very thick, \Cma] node {\tiny $90$} (1);
    \path [->] (1) edge[very thick, \Cma] node {\tiny $120$} (c13);
    \path [->] (c13) edge[dotted] (3);
    \end{scope}
\end{tikzpicture}
\begin{tikzpicture}[baseline]
    \node[anchor=west, minimum size=0pt, inner sep=0pt] at (-0.05, 1.3) {\footnotesize \emph{(d)}};
    \node[draw, single arrow,
            minimum height=3mm, minimum width=2mm,
            single arrow head extend=0.8mm, fill=black,
            anchor=west, inner sep=2pt] at (0, .98) {};
\end{tikzpicture}
\begin{tikzpicture}[baseline]
    \begin{scope}[every node/.style={circle,draw, inner sep=1.5pt}]
    \node (r) at (1.3, 2.08) {\tiny $\root$};
    \node (1) at (1.1, 0.23) {\tiny $1$};
    \node (2) at (3, 1.5) {\tiny $2$};
    \node (3) at (3, 0.46) {\tiny $3$};
    \node (4) at (2.1, 0.98) {\tiny $4$};
    \end{scope}
    \begin{scope}[>=latex,
              every node/.style={fill=white, inner sep=0.5pt},
              every edge/.style={draw}]
    \path [->] (r) edge[very thick, \Cma] node {\tiny $90$} (1);
    \path [->] (r) edge[very thin, \Cdead, dashed, bend left=20] node {\tiny $40$} (2);
    \path [->] (1) edge[very thick, \Cma, bend right=25] node {\tiny $10$} (3);
    \path [->] (2) edge[very thick, \Cma, bend right=50] node {\tiny $60$} (4);
    \path [->] (2) edge[] node {\tiny $30$} (3);
    \path [->] (3) edge[very thick, \Cma, bend right=50] node {\tiny $50$} (2);
    \path [->] (4) edge[bend right=40] node {\tiny $70$} (3);
    \path [->] (4) edge[] node {\tiny $20$} (1);
\end{scope}
\end{tikzpicture}

\caption{
Worked example of finding the best dependency tree.
Let $G$ be the graph in the left-most figure, 
the greedy graph $\greedy{G}{}$ ({\color{\Cgreedy}highlighted}) contains a \critical $\cg$, {$\nodeId{2}\rightarrow \nodeId{4} \rightarrow \nodeId{3} \rightarrow \nodeId{2}$}.
Step \emph{(a)} shows the contraction $\contract{G}{\cg}$ where $\cg$ is replaced by {\color{\Ccyc}\nodeId{\cc}}, and edges are cast as \Wenter, \Wexit, \Wexternal, or \Wdead edges in $\contract{G}{\cg}$.
We see the bookkeeping function $\prov$ (as \dottedarrow), e.g., $\prov\edge{\cc}{20}{1} = \edge{4}{20}{1}$ and $\prov\edge{\root}{170}{\cc} = \edge{\root}{40}{2}$.
Step \emph{(b)} takes the greedy (sub)graph of $\contract{G}{\cg}$ and since it contains no cycles, it is $\mst{(\contract{G}{\cg})}{}$ as ({\color{\Cma}highlighted}).
Note that if we did not require a dependency tree, we could now use \cref{thm:ma} to break $\cg$ at \nodeId{2}.
Step \emph{(c)} takes $\mst{(\contract{G}{\cg})}{}$, which has \emph{two} root edges, $\edge{\root}{90}{1}$ and $\edge{\root}{170}{\cc}$, and removes the edge with minimal consequence: 
removing $\edge{\root}{90}{1}$ leads to $\bar{w}=190$, while 
removing $\edge{\root}{170}{\cc}$ leads to $\bar{w}=210$.  
We pick the latter.
As deleting $\edge{\root}{170}{\cc}$ does not lead to a critical cycle (\emph{optimization case}), we remove it from the graph (shown as \dashedarrow) and so we get $\bestDep{(\contract{G}{\cg})}$ ({\color{\Cma}highlighted}).
Step \emph{(d)} stitches $\stitch{\bestDep{(\contract{G}{\cg})}{}}{\cg^{(3)}}$ yielding $\bestDep{G}$ ({\color{\Cma}highlighted}).}
\label{fig:example}
\end{figure*}%

\section{Approach}
In this section, the marker {\smaller{$\figrefmarker$}} indicates that a recently introduced concept is illustrated the worked example in \cref{fig:example}.
Let $G = (\root, V, E)$ be a \defn{rooted weighted directed graph}
where 
$V$ is a set of nodes,
$E$ is a set of weighted edges,~$E \subseteq \{ \edge{i}{w}{j} \mid i,j \in V,\, w \in \mathbb{R} \}$,\footnote{When there is no ambiguity, we may abuse notation using $G$ to refer to either its node or edge set, 
e.g., we may write ${\edge{i}{}{j}\in G}$ to mean ${\edge{i}{}{j}\in E}$, and ${i\in G}$ to mean ${i\in V}$.}
and $\root \in V$ is a designated root node with no incoming edges.
In terms of dependency parsing, each non-$\root$ node corresponds to a token in the sentence, and $\root$ represents the special root token that is not a token in the sentence.
Edges represent possible dependency relations between tokens.  The edge weights are scores from a model (e.g., linear \cite{mcdonald-etal-2005-non}, or neural network \cite{dozat-etal-2017-stanfords}).
\cref{fig:deptree} shows an example. 
We allow $G$ to be a \defn{multi-graph}, i.e., we allow multiple edges between pairs of nodes.
Multi-graphs are a natural encoding of \emph{labeled} dependency relations where possible labels between words are captured by multiple edges between nodes in the graph.
Multi-graphs pose no difficulty as only the highest-weight edge between two nodes may be selected in the returned tree.

\newcommand{\constraint}[1]{{\small({#1})}\xspace}
\newcommand{\arbIncoming}[0]{\constraint{C1}}
\newcommand{\arbNoCycles}[0]{\constraint{C2}}
An \defn{\ar} of $G$ is a subgraph $\tree = (\root, V, E')$
where $E' \subseteq E$ such that:
\begin{enumerate}[leftmargin=2.5em, topsep=8pt, itemsep=1pt, parsep=5pt]
\item[\arbIncoming] Each non-root node has exactly one incoming edge (thus, ${\abs{E'} = \abs{V}{-}1}$); \label{arb:incoming}
\item[\arbNoCycles] $\tree$ has no cycles. \label{arb:nocycles}
\end{enumerate}
\newcommand{\depRC}[0]{\constraint{C3}}
A \defn{dependency tree} of $G$ is an \ar that additionally satisfies 
\begin{enumerate}[leftmargin=2.5em, topsep=8pt, itemsep=1pt, parsep=1pt]
    \item[\depRC] $\abs{\{ \edge{\root}{}{\_} \in E' \} } = 1$
\end{enumerate}
In words, \depRC says $\tree$ contains exactly one out-edge from $\root$.
Let $\arbs{}{G}$ and $\arbsone{}{G}$ denote the sets of \ars and dependency trees, respectively.\looseness=-1

The weight of a graph or subgraph is defined as
\begin{equation}\label{eq:additive-weight}
    \treecost{G} \defeq \smashoperator{\sum_{\edge{i}{w}{j} \in G}}\, w
\end{equation}
\noindent In \cref{sec:ma}, we describe an efficient algorithm for finding the best (highest-weight) \ar 
\begin{equation}
   \mst{G}{} = \argmax_{\tree \in \arbs{}{G}} \treecost{\tree} \\
\end{equation}
and, in \cref{sec:root-constraint}, the best dependency tree.\footnote{Probabilistic models of \ars
(e.g., \citet{koo-et-al-2007,dozat})
typically seek the maximum a posteriori structure, $\argmax_{\!\!\tree} \prod_{e \in \tree} p_e$
$=$ $\argmax_{\!\!\tree} \sum_{e \in \tree} \log p_e$. 
This case can be solved as \cref{eq:additive-weight} by taking the weight of $e$ to be $\log p_e$ because $p_e \ge 0$.}%
\begin{equation}
   \bestDep{G} = \argmax_{\tree \in \arbsone{}{G}} \treecost{\tree}
\end{equation}

\subsection{Finding the best \ar}\label{sec:ma}
A first stab at finding $\mst{G}{}$ would be to select the best (non-self-loop) incoming edge for each node.  Although, this satisfies \arbIncoming, it does not (necessarily) satisfy \arbNoCycles.
We call this subgraph the \defn{greedy graph}, denoted $\greedy{G}{}$.\figref
Clearly, $\treecost{\greedy{G}{}} \ge \treecost{\mst{G}{}}$ since it is subject to fewer restrictions.  Furthermore, if $\greedy{G}{}$ happens to be acyclic, it is clearly equal to $\mst{G}{}$.
What are we to do in the event of a cycle? That answer has two parts.

\emph{Part 1:} We call any cycle $\cg$ in $\greedy{G}{}$ a \defn{critical cycle}.\figref
Naturally, \arbNoCycles implies that \criticals can never be part of an \ar.
However, they help us identify optimal \ars for certain \emph{subproblems}.  Specifically, if we were to ``break'' the cycle at any node $j \in \cg$ by removing its (unique) incoming edge, we would have an optimal \ar rooted at $j$ for the subgraph over the nodes in $\cg$.  
Let $\cg^{(j)}$ be a subgraph of $\cg$ rooted at $j$ that denotes the broken cycle at $j$.
Let $G^{(j)}_\cg$ be the subgraph rooted at $j$ where $G_\cg$ contains all the nodes in $\cg$ and all edges between them from $G$.
Since $\cg$ is a critical cycle,  $\cg^{(j)}$ is the greedy graph of ${G^{(j)}_\cg}$. Moreover, as it is acyclic, we have that {$\cg^{(j)} = \mst{({G^{(j)}_{\cg}})}{}$}.
The key to finding the best \ar of the entire graph is, thus, determining where to break \criticals.

\emph{Part 2:} Breaking cycles is done with a recursive algorithm that solves the ``outer problem'' of fitting the (unbroken) cycle into an optimal \ar.  The algorithm treats the cycle as a single \emph{contracted} node.  Formally, a \defn{cycle contraction} takes a graph $G$ and a (not necessarily critical) cycle $\cg$, 
and creates a new graph denoted $\contract{G}{\cg}$ with
the same root,
nodes $(V\setminus\cg\cup\{\cc\})$
where $\cc \notin V$ is a new node that represents the cycle,
and contains the following set of edges:
For any $\edge{i}{w}{j}\in G$
\begin{itemize}[leftmargin=1em, topsep=8pt, itemsep=1pt, parsep=5pt]
    \item \Wenter: if $i\!\notin\!\cg, j\!\in\!\cg$, then  $\edge{i}{w'}{\cc}\in \contract{G}{\cg}$ where $w' = w + \treecost{\cg^{(j)}}$. Akin to dynamic programming, this choice edge weight (due to \citet{georgiadis}) gives the best ``cost-to-go'' for breaking the cycle at $j$.
    \item \Wexit: if $i\!\in\!\cg, j\!\notin\!\cg$, then $\edge{\cc}{w}{j}\in \contract{G}{\cg}$
    \item \Wexternal: if $i\!\notin\!\cg, j\!\notin\!\cg$, then  $\edge{i}{w}{j}\in \contract{G}{\cg}$
    \item \Wdead: if $i\!\in\!\cg, j\!\in\!\cg$, then no edge related to $\edge{i}{w}{j}$ is in $\contract{G}{\cg}$. This is because such an edge $\edge{\cc}{}{\cc}$ would be a self-cycle, which can never be part of an \ar.    
\end{itemize}

\vspace{\baselineskip}
\noindent Additionally, we define a \defn{bookkeeping function}, $\prov$, which maps the nodes and edges of $\contract{G}{\cg}$ to their counterparts in $G$. We overload $\prov(G)$ to apply point-wise to the constituent nodes and edges.\figref

By \arbIncoming, we have that for any $\tree_{\cg}\in\arbs{}{\contract{G}{\cg}}$, there exists exactly one incoming edge $\edge{i}{}{\cc}$ to the cycle node $\cc$.
We can use $\prov$ to infer where the cycle was broken with $\prov\edge{i}{}{\cc}=\edge{i}{}{j}$.
We call $j$ the \textbf{\entersite} of $\tree_{\cg}$.
Consequently, we can stitch together an \ar as $\prov(\tree_{\cg}) \cup \cg^{(j)}$. We use the shorthand $\stitch{\tree_{\cg}}{\cg^{(j)}}$ for this operation due to its visual similarity to unraveling a cycle.\figref

$\contract{G}{\cg}$ may also have a \critical, so we have to apply this reasoning recursively.
This is captured by \citet{Karp71}'s Theorem 1.\footnote{We have lightly modified the original theorem.  For completeness, \cref{app:karp} provides a proof in our notation.}

\begin{thm}\label{thm:ma}
For any graph $G$, either $\mst{G}{}=\greedy{G}{}$ or $G$ contains a \critical $\cg$ and $\mst{G}{}=\stitch{\mst{(\contract{G}{\cg})}{}}{\cg^{(j)}}$ where $j$ is the \entersite of $\mst{(\contract{G}{\cg})}{}$.
Furthermore, $\treecost{\mst{(\contract{G}{\cg})}{}} = \treecost{\mst{G}{}}$.
\end{thm}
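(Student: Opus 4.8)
The plan is to dispatch the dichotomy first and then reduce the maximization over $\arbs{}{G}$ to the one over $\arbs{}{\contract{G}{\cg}}$ by exhibiting a weight-preserving bijection. If $\greedy{G}{}$ is acyclic then, as already noted, it is an arborescence and, having the maximum-weight incoming edge at every non-root node, it equals $\mst{G}{}$; this is the first disjunct. Otherwise $\greedy{G}{}$ contains a cycle, which is by definition a \critical $\cg$, and the remainder of the argument lives in this case.

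The engine of the proof is the claim that stitching is a weight-preserving bijection between $\arbs{}{\contract{G}{\cg}}$ and the arborescences of $G$ that contain $\cg^{(j)}$ for some $j\in\cg$. Given $\tree_{\cg}\in\arbs{}{\contract{G}{\cg}}$, \arbIncoming forces $\cc$ to have a single incoming (\Wenter) edge, and applying $\prov$ to it recovers the break point $j$, so $\stitch{\tree_{\cg}}{\cg^{(j)}}$ is well defined; its acyclicity follows from a projection argument, since contracting the path $\cg^{(j)}$ back to $\cc$ would turn any offending directed cycle into a cycle of the acyclic $\tree_{\cg}$ (using that $\cc$, and hence $j$, has in-degree one). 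The weight bookkeeping is exactly where the \Wenter weight $w' = w + \treecost{\cg^{(j)}}$ earns its keep: $\prov$ restores the original weight $w$ and leaves the \Wexit and \Wexternal weights untouched, so the extra $\treecost{\cg^{(j)}}$ on the contracted edge cancels against the weight of the reinstated broken cycle, yielding $\treecost{\stitch{\tree_{\cg}}{\cg^{(j)}}} = \treecost{\tree_{\cg}}$. The inverse map simply contracts $\cg$ in an arborescence that contains $\cg^{(j)}$, which is legal precisely because $j$ is then the only cycle node receiving an edge from outside $\cg$.

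The main obstacle is showing that the optimum actually lands in the image of this bijection, i.e.\ an exchange lemma: every $\tree\in\arbs{}{G}$ admits a counterpart $\tree'\in\arbs{}{G}$ with $\treecost{\tree'}\ge\treecost{\tree}$ and $\cg^{(j)}\subseteq\tree'$ for some $j$. I would choose $j$ to be the cycle node closest to $\root$ in $\tree$; minimality guarantees that the $\tree$-path from $\root$ to $j$ meets no other node of $\cg$, so $j$'s incoming edge comes from outside the cycle. I then form $\tree'$ by keeping $j$'s incoming edge and re-pointing every other cycle node $k$ to its incoming cycle edge $\delta_k$; because $\cg$ is critical, $\delta_k$ is the maximum-weight edge into $k$, so $\treecost{\tree'}\ge\treecost{\tree}$. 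The delicate step is the acyclicity of $\tree'$: following parent pointers, every cycle node runs along $\cg^{(j)}$ back to $j$ and then up $j$'s cycle-free ancestor path to $\root$, while every non-cycle node follows its unchanged $\tree$-parents and, if it ever meets $\cg$, is routed into the same $\cg^{(j)}\to j\to\root$ tail; no trajectory can loop, so $\tree'$ — with in-degree one at every non-root node — is an arborescence. This min-depth choice is what prevents the re-pointing from closing a fresh cycle and is the crux of the whole theorem.

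Finally I combine the pieces. Applying the exchange lemma to $\mst{G}{}$ produces an optimal arborescence containing $\cg^{(j)}$, so the maximum weight over $\arbs{}{G}$ is attained within the image of the stitching bijection; weight-preservation then gives $\treecost{\mst{(\contract{G}{\cg})}{}} = \treecost{\mst{G}{}}$, the ``furthermore'' claim. Pulling the optimal contracted arborescence $\mst{(\contract{G}{\cg})}{}$ back through the bijection, with $j$ its \entersite, yields an optimal arborescence of $G$, i.e.\ $\mst{G}{} = \stitch{\mst{(\contract{G}{\cg})}{}}{\cg^{(j)}}$, completing the proof.
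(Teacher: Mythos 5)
Your proof is correct and follows the same contraction-based route as the paper: both arguments rest on a weight-preserving correspondence between $\arbs{}{\contract{G}{\cg}}$ and arborescences of $G$ of a restricted form, plus an exchange step showing the optimum may be assumed to have that form. The organizational difference is that the paper splits the exchange step into \cref{lemma:c-j} (the broken cycle $\cg^{(j)}$ is optimal on the cycle-induced subgraph $G^{(j)}_{\cg}$) and \cref{lemma:enter} (an arborescence with two edges entering $\cg$ can be improved to one with a single entering edge), together with the decomposition of \cref{prop:GtoContract}, whereas you fold all of this into a single exchange lemma that rewires every cycle node except $j$ to its greedy cycle edge in one pass. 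Your version is in fact tighter on the one delicate point: you specify \emph{which} entering edge to keep (the one into the cycle node closest to $\root$ in $\tree$) and then verify acyclicity of the rewired tree by chasing parent pointers, whereas the paper's \cref{lemma:enter} keeps an arbitrarily chosen entering edge $e$ and asserts acyclicity without argument --- and for an arbitrary choice the construction can genuinely close a cycle (e.g., when the tail of the kept edge is a $\tree$-descendant of another cycle node), so your min-depth choice is not a stylistic flourish but the step that makes the exchange go through. Both routes prove the theorem at the same cost; yours is marginally more self-contained and rigorous, while the paper's modular lemmas (\cref{prop:GtoContract}, \cref{cor:ContracttoG}) are set up so they can be reused later in the proof of \cref{thm:root}.
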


\cref{thm:ma} suggests a recursive strategy for finding $\mst{G}{}$, which is the basis of many efficient algorithms \citep{Tarjan77, camerini1979note, georgiadis,chu1965shortest,bock1971algorithm,edmonds1967optimum}.
We detail one such algorithm in \cref{alg:ma}.
\cref{alg:ma} can be made to run in $\bigo{n^2}$ time for dense
with the appropriate implementation choices, such as 
Union-Find \citep{unionfind} to maintain membership of nodes to contracted nodes, as well as 
radix sort \cite{radix} to sort incoming edges to contracted nodes; using a regular sort would add a factor of $\log n$ to the runtime.

\begin{algorithm}[t]
\small
\begin{algorithmic}[1]
\Func{$\opt{G}{}$} \Comment{Find $\mst{G}{} \in \arbs{}{G}$ or $\bestDep{G}\in\arbsone{}{G}$}

\If{$\greedy{G}{}$ has a cycle $\cg$}
\Comment{Recursive case}
\State \Return $\stitch{\opt{\contract{G}{\cg}}{}}{\cg^{(j)}}$
\Else \Comment{Base case}
\If{we require a dependency tree (\protect\cref{sec:root-constraint})} \label{line:if}
\State \Return $\constrain{G}{}$
\Else
\State \Return $\greedy{G}{}$
\EndIf
\EndIf
\EndFunc

\vspace{.5\baselineskip}

\Func{$\constrain{G}{}$} \Comment{
Find $\bestDep{G} \in \arbsone{}{G}$; $\greedy{G}{} \in \arbs{}{G}$.
}
\State $\sigma \gets$ set of $\root$'s outgoing edges in $\greedy{G}{}$
\If{$\abs{\sigma} = 1$}
\Return $\greedy{G}{}$ \Comment{Root constraint satisfied}
\EndIf 

\State $G' \gets \smashoperator{\argmax\limits_{e\in\sigma: G''=\multidelete{G}{e}}}\treecost{\greedy{G''}{}}$  \Comment{Find best edge removal}
\If{$\greedy{G'}{}$ has cycle $\cg$} \Comment{Reduction case}
\State \Return $\stitch{\constrain{\contract{G}{\cg}}{}}{\cg^{(j)}}$
\Else \Comment{Optimization case}
\State \Return $\constrain{G'}{}$
\EndIf
\EndFunc
\end{algorithmic}
\caption{}
\label{alg:ma}
\end{algorithm}

\subsection{Finding the best dependency tree}
\label{sec:root-constraint}
\citet{GabowT84} propose an algorithm that does additional recursion at the base case of $\opt{G}{}$ (the additional if-statement at \cref{line:if}) to recover $\bestDep{G}$ instead of $\mst{G}{}$.

Suppose that the set of edges emanating from the root in $\greedy{G}{}$ is given by $\sigma$ and $\abs{\sigma}> 1$.
We consider removing each edge in $\edge{\root}{}{j}\in\sigma$ from $G$.
Since $G$ may have \emph{multiple} edges from $\root$ to $j$, we write $\multidelete{G}{e}$ to mean deleting \emph{all} edges with the same edge points as $e$.
Let $G'$ be the graph $\multidelete{G}{e'}$ where $e'\in\sigma$ is chosen greedily to maximize $\treecost{\greedy{G'}{}}$.  Consider the two possible cases:

\emph{Optimization case.} If $G'$ has no \criticals, then $\greedy{G'}{}$  must be the best \ar with one fewer edges emanating from the root than $\greedy{G}{}$ by our greedy choice of $e'$.\figref

\emph{Reduction case.} If $G'$ has a \critical $\cg$,
then all edges in $\cg$ that do not point to $j$ are in $\greedy{G}{}$.
If $e'\notin\bestDep{G}$, then $\cg$ is \critical in the context of constrained problem and so we can apply \cref{thm:ma} to recover $\bestDep{G}$.
Otherwise, $e\in\bestDep{G}$ and we can break $\cg$ at $j$ to get $\cg^{(j)}$, which is comprised of edges in $\greedy{G}{}$.
Therefore, we can find $\bestDep{(\contract{G}{\cg})}$ to retrieve $\bestDep{G}$.
This notion is formalized in the following theorem.\footnote{For completeness, \cref{app:root} provides a proof of \cref{thm:root}.}\looseness=-1

\begin{thm}\label{thm:root}
For any graph $G$ with $\mst{G}{}\!=\!\greedy{G}{}$, let $\sigma$ be the set of outgoing edges from $\root$ in $\mst{G}{}$.
If $\abs{\sigma}\!=\!1$, then $\bestDep{G}\!=\!\mst{G}{}$.
Otherwise, let $G'\!=\!\multidelete{G}{e'}$ for $e'\in\sigma$ that maximizes $\treecost{\greedy{G'}{}}$,
then either $\bestDep{G}\!=\!\bestDep{G'}$ or there exists a \critical $\cg$ in $G'$ such that $\bestDep{G}\!=\!\stitch{\bestDep{(\contract{G}{\cg})}}{\cg^{(j)}}$ where $j$ is the \entersite of $\bestDep{(\contract{G}{\cg})}$.
\end{thm}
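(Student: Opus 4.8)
The plan is to split on $\abs{\sigma}$ exactly as the statement does and to reduce the two nontrivial cases to results already in hand, namely \cref{thm:ma} and the optimality of the greedy graph when it is acyclic. When $\abs{\sigma}=1$, $\mst{G}{}$ already satisfies \depRC, so it is itself a dependency tree; since $\arbsone{}{G}\subseteq\arbs{}{G}$ we get $\treecost{\bestDep{G}}\le\treecost{\mst{G}{}}$, while $\mst{G}{}\in\arbsone{}{G}$ forces the reverse inequality, giving $\bestDep{G}=\mst{G}{}$. For $\abs{\sigma}>1$, I would first record the \emph{local} effect of the greedy removal: because $\mst{G}{}=\greedy{G}{}$ is assembled node-by-node from best incoming edges, deleting every copy of $e'=\edge{\root}{}{j}$ changes $\greedy{G}{}$ only at $j$, which is forced onto its best non-root incoming edge at a ``regret'' $\delta_j = w(e') - w(\text{2nd best into } j)$. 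Hence $\treecost{\greedy{G'}{}}=\treecost{\greedy{G}{}}-\delta_j$, so the greedy maximizer $e'$ is precisely the root edge of \emph{minimum} regret, and $\greedy{G'}{}$ has exactly $\abs{\sigma}-1$ root edges. I would use this throughout.

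In the \emph{optimization case}, $\greedy{G'}{}$ is acyclic, so $\greedy{G'}{}=\mst{G'}{}$ by the base-case reasoning of \cref{sec:ma}, and I must show $\bestDep{G}=\bestDep{G'}$. Since $G'\subseteq G$ gives $\treecost{\bestDep{G'}}\le\treecost{\bestDep{G}}$ for free, it suffices to exhibit an optimal dependency tree of $G$ that avoids $e'$. I would take any $\bestDep{G}$; if its unique root edge does not point to $j$ we are done, so assume it is a copy of $\edge{\root}{}{j}$. Because $\abs{\sigma}>1$, some other root-child $j_i$ of $\mst{G}{}$ is demoted in $\bestDep{G}$, and I would rotate the root edge off $j$ onto $j_i$, reattaching $j$ to the non-root parent witnessed by the acyclic $\greedy{G'}{}$. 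Minimality of $\delta_j$ guarantees the rotation cannot decrease weight, producing an equally good tree in $\arbsone{}{G'}$, whence $\bestDep{G}=\bestDep{G'}$. The delicate point is checking the rotation stays acyclic and keeps a single root edge; acyclicity of $\greedy{G'}{}$ is exactly what makes the reattachment of $j$ safe.

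In the \emph{reduction case}, $\greedy{G'}{}$ has a \critical $\cg$. Since $\cg\subseteq\greedy{G'}{}$ and $\greedy{G}{}$ is acyclic, $\cg$ must traverse $j$'s new non-root incoming edge; thus $j\in\cg$, every other edge of $\cg$ is a best-incoming edge of $\greedy{G}{}$, and no other node of $\cg$ has $\root$ as its greedy parent (else $\root$ would lie on the cycle). I would therefore contract $\cg$ \emph{in the original graph} $G$, where $e'$ reappears as the \Wenter edge $\edge{\root}{}{\cc}$ with \entersite $j$. The core claim is a weight-preserving, root-constraint-respecting correspondence: a dependency tree of $G$ that breaks $\cg$ at a single node $j'$ maps, via $\prov$ and $\stitch{\cdot}{\cg^{(j')}}$, to a dependency tree of $\contract{G}{\cg}$ whose incoming edge to $\cc$ points at $j'$, with the \Wenter cost-to-go $\treecost{\cg^{(j')}}$ making weights additive. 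Crucially the single-root-edge property transfers both ways: the lone root edge of a dependency tree of $G$ either enters $\cg$ (necessarily at $j$, becoming $\edge{\root}{}{\cc}$) or points outside $\cg$, and in either case it is the unique root edge of the contracted tree. Splitting on whether $e'\in\bestDep{G}$, when $e'\notin\bestDep{G}$ the cycle is \critical for the constrained problem and \cref{thm:ma}'s contraction applies, whereas when $e'\in\bestDep{G}$ the optimum breaks $\cg$ at $j$, yielding $\cg^{(j)}$ from $\greedy{G}{}$-edges; both branches collapse to $\bestDep{G}=\stitch{\bestDep{(\contract{G}{\cg})}}{\cg^{(j)}}$.

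I expect the reduction case to be the main obstacle. The subtlety is that $\cg$ is discovered in $G'$ (through the second-best edge into $j$) yet contracted in $G$, so I must argue that every optimal \emph{dependency} tree breaks $\cg$ at exactly one node—mirroring the argument behind \cref{thm:ma} but now under the extra constraint—and that the root constraint is neither created nor destroyed by folding $e'$ into the enter edge $\edge{\root}{}{\cc}$. Getting the bookkeeping of $\prov$ and the cost-to-go $\treecost{\cg^{(j')}}$ to line up so that the two branches genuinely coincide is where the care is needed.
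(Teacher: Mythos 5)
Your overall case split matches the paper's, and your \emph{reduction case} is essentially the paper's \cref{lemma:root-reduce}: contract $\cg$ in $G$ rather than in $G'$, note that the root constraint transfers across the contraction because $\root\notin\cg$ (so the unique root edge of a dependency tree survives as the unique root edge of its contracted image), invoke the expansion correspondence of \cref{cor:ContracttoG}, and split on whether $e'\in\bestDep{G}$. Your preliminary observation---that deleting all copies of $e'=\edge{\root}{}{j}$ perturbs $\greedy{G}{}$ only at $j$, so the greedy maximizer is the root edge of minimum regret $\delta_j$---is correct and is exactly what the paper exploits for its constant-time greedy updates. The $\abs{\sigma}=1$ case is fine.

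The genuine gap is in your \emph{optimization case}. The paper gets $\bestDep{G}=\bestDep{G'}$ from an induction on $\abs{\sigma}$ together with the upper bound $\treecost{\tree}\le\treecost{\greedy{\multidelete{G}{e}}{}}\le\treecost{\greedy{G'}{}}$, valid for any dependency tree $\tree$ and any $e\in\sigma$ whose target is not $\tree$'s root-child: every dependency tree omits all but one root edge, hence lies inside some $\multidelete{G}{e}$, and a greedy graph weight-dominates every subgraph with one incoming edge per non-root node. You instead attempt an explicit exchange: move the root edge from $j$ to another root-child $j_i$ and reattach $j$ via its best non-root incoming edge $b_j=\edge{k}{}{j}$. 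The weight accounting is sound ($j_i$'s parent in $\bestDep{G}$ is at best its second-best incoming edge, so the swap gains at least $\delta_{j_i}-\delta_j\ge 0$), but the acyclicity claim is not: acyclicity of $\greedy{G'}{}$ only guarantees that $b_j$ closes no cycle \emph{with the greedy edges}, whereas your rotated tree keeps the edges of $\bestDep{G}$, which need not be greedy. Concretely, if $j$ is the root-child of $\bestDep{G}$, then every non-root node is a descendant of $j$; after deleting $j_i$'s parent edge, $k$ is still reachable from $j$ unless $k$ happens to lie in the detached subtree of $j_i$, and in every other case adding $\edge{k}{}{j}$ creates a cycle. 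Nothing in your setup forces $k$ into the subtree of $j_i$, so the rotation can fail, and the failure is not repaired by choosing a different $j_i\in\sigma$ in general. To close this case you need either a more elaborate multi-edge exchange or the subgraph upper-bound argument above.
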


\cref{thm:root} suggests a recursive strategy $\mathrm{constrain}$ (\cref{alg:ma}) for finding $\bestDep{G}$ given $\mst{G}{}$.
\citet[Theorem 7.1]{GabowT84} prove that such a strategy will execute in $\bigo{n^2}$ and so when combined with $\opt{G}{}$ (\cref{alg:ma}) leads to a $\bigo{n^2}$ runtime for finding $\bestDep{G}$ given a graph $G$.
The efficiency of the algorithm amounts to requiring a bound of $\bigo{n}$ calls to $\mathrm{constrain}$ that will lead to the \emph{reduction case} in order to obtain any number \emph{optimization cases}.
Each recursive call does a linear amount of work to search for the edge to remove and to stitch together the results of recursion.
Rather than computing the greedy graph from scratch, 
implementations should exploit that each edge removal will only change one element of the greedy graph. Thus, we can find $\treecost{\greedy{\multidelete{G}{e'}}{}}$ in constant time.

\begin{table*}[ht]
    \centering
    \begin{tabular}{lcccc}
         \bf Setting & \bf \# Languages & \bf Malformed rate & \bf Rel. $\Delta$ UAS & \bf Rel. $\Delta$ Exact Match \\ \midrule
         High & $20$ & $0.63\%$ & $0.0041\%$ & $0.15\%$  \\
         Medium & $32$ & $1.02\%$ & $0.0012\%$ & $0.22\%$\\
         Low & $11$ & $6.21\%$ & $0.0368\%$ & $2.91\%$
    \end{tabular}
    \caption{Average malformed rate, relative UAS change, and relative exact match score change for different data settings. The 63 languages are split by their training set size $\abs{\mathrm{train}}$ into high ($\abs{\mathrm{train}}\ge 10,000$), medium ($1,000\le\abs{\mathrm{train}}<10,000$), and low ($\abs{\mathrm{train}}<1,000$).
    }
    \label{tab:breakdown}
\end{table*}

\begin{figure}[t]
    \centering
    \includegraphics[width=0.45\textwidth]{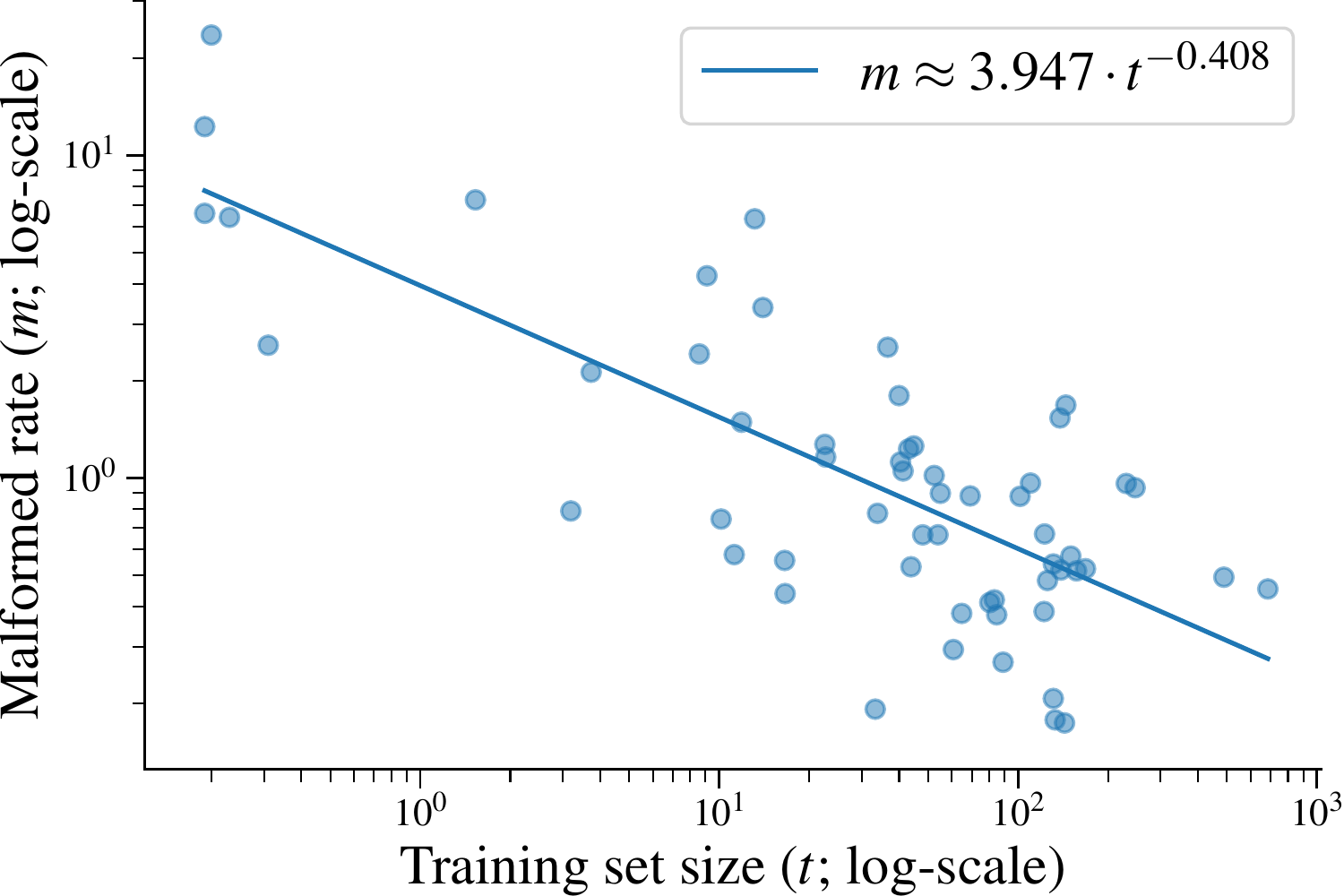}
    \caption{Proportion of malformed trees when decoding pre-trained models \citep{stanza} for languages with varying training set sizes.}
    \label{fig:malformed}
\end{figure}

\begin{figure*}[!ht]
    \centering
    \begin{minipage}{0.47\textwidth}
        \centering
        \includegraphics[width=0.98\textwidth]{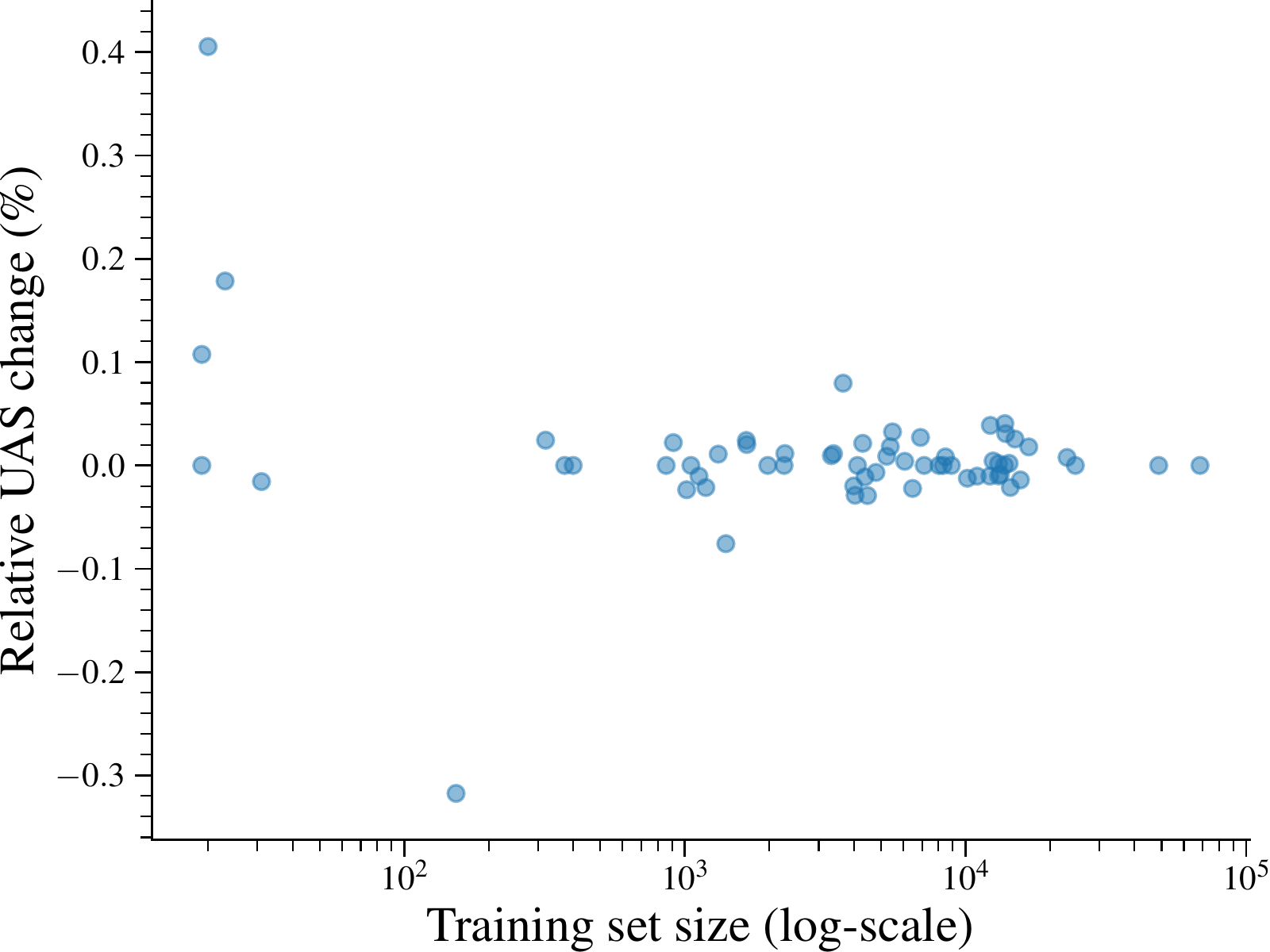}
    \end{minipage}\hfill
    \begin{minipage}{0.47\textwidth}
        \centering
        \includegraphics[width=0.98\textwidth]{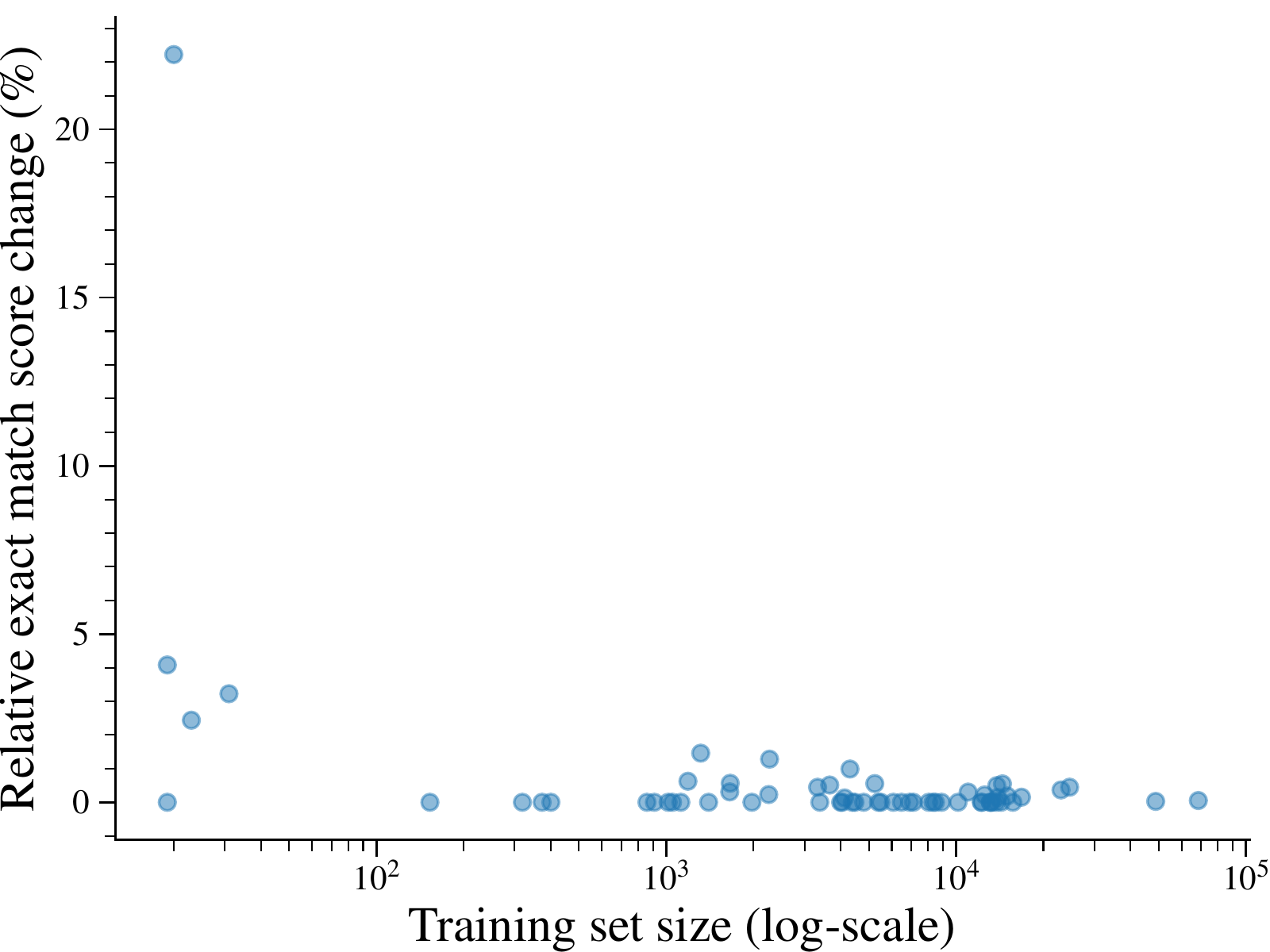}
    \end{minipage}
    \caption{Relative change in UAS and exact match score when using the unconstrained and constrained algorithms for languages with varying training set sizes.}
    \label{fig:change} 
\end{figure*}

\section{Experiment}\label{sec:experiment}

How often do state-of-the-art parsers generate malformed dependency trees?
We examined $63$ Universal Dependency Treebanks \citep{ud} and computed the rate of malformed trees when decoding using edge weights generated by pre-trained models supplied by \citet{stanza}.
On average, we observed that $1.80\%$ of trees are malformed.
We were surprised to see that---although the edge-factored model used is not expressive enough to capture the root constraint \emph{exactly}---there are useful correlates of the root constraint in the surface form of the sentence, which the model appears to use to workaround this limitation.
This becomes further evident when we examine the relative change\footnote{The relative difference is computed with respect to the unconstrained algorithm's scores.} in UAS ($0.0083\%$) and exact match scores ($0.60\%$) when using the constrained algorithm as opposed to the unconstrained algorithm.

Nevertheless, given less data, it is harder to learn to exploit the surface correlates; thus, we see an increasing average rate of violation, $6.21\%$, when examining languages with training set sizes of less than $1,000$ sentences.
Similarly, the relative change in UAS and exact match score increases to $0.0368\%$ and $2.91\%$ respectively.
Indeed, the worst violation rate was $24\%$ was seen for Kurmanji which only contains $20$ sentences in the training set.
Kurmanji consequently had the largest relative changes to both UAS and exact match scores of $0.41\%$ and $22.22\%$.
We break down the malformed rate and accuracy changes by training size in \cref{tab:breakdown}.
Furthermore, the correlation between training size and malformed tree rate can be seen in \cref{fig:malformed} while the correlation between training size and relative accuracy change can be seen in \cref{fig:change}.
We provide a full table of the results in \cref{app:table}.

\section{Conclusion}
In this paper, we have bridged the gap between the graph-theory and dependency parsing literature.
We presented an efficient $\bigo{n^2}$ for finding the maximum \ar of a graph.
Furthermore, we highlighted an important distinction between dependency trees and \ars, namely that dependency trees are \ars subject to a \emph{root constraint}.
Previous work uses inefficient algorithms to enforce this constraint.
We provide a solution which runs in $\bigo{n^2}$.
Our hope is that this paper will remind future research in dependency parsing to please mind the root.

\section*{Acknowledgments}
We would like to thank all reviewers for their valuable feedback and suggestions. The first author is supported by the University of Cambridge School of Technology Vice-Chancellor's Scholarship as well as by the University of Cambridge Department of Computer Science and Technology's EPSRC.

\bibliography{acl2020}

\begin{thebibliography}{21}
\expandafter\ifx\csname natexlab\endcsname\relax\def\natexlab#1{#1}\fi

\bibitem[{Bej{\v{c}}ek et~al.(2013)Bej{\v{c}}ek, Haji{\v{c}}ov{\'{a}},
  Haji{\v{c}}, J{\'{i}}nov{\'{a}}, Kettnerov{\'{a}},
  Kol{\'{a}}{\v{r}}ov{\'{a}}, Mikulov{\'{a}}, M{\'{i}}rovsk{\'{y}}, Nedoluzhko,
  Panevov{\'{a}}, Pol{\'{a}}kov{\'{a}}, {\v{S}}ev{\v{c}}{\'{i}}kov{\'{a}},
  {\v{S}}t{\v{e}}p{\'{a}}nek, and Zik{\'{a}}nov{\'{a}}}]{prague_dep}
Eduard Bej{\v{c}}ek, Eva Haji{\v{c}}ov{\'{a}}, Jan Haji{\v{c}}, Pavl{\'{i}}na
  J{\'{i}}nov{\'{a}}, V{\'{a}}clava Kettnerov{\'{a}}, Veronika
  Kol{\'{a}}{\v{r}}ov{\'{a}}, Marie Mikulov{\'{a}}, Ji{\v{r}}{\'{i}}
  M{\'{i}}rovsk{\'{y}}, Anna Nedoluzhko, Jarmila Panevov{\'{a}}, Lucie
  Pol{\'{a}}kov{\'{a}}, Magda {\v{S}}ev{\v{c}}{\'{i}}kov{\'{a}}, Jan
  {\v{S}}t{\v{e}}p{\'{a}}nek, and {\v{S}}{\'{a}}rka Zik{\'{a}}nov{\'{a}}. 2013.
\newblock \href {http://ufal.mff.cuni.cz/pdt3.0} {Prague dependency treebank
  3.0}.

\bibitem[{Bock(1971)}]{bock1971algorithm}
F.~C. Bock. 1971.
\newblock An algorithm to construct a minimum directed spanning tree in a
  directed network.
\newblock \emph{Developments in Operations Research}.

\bibitem[{Camerini et~al.(1979)Camerini, Fratta, and
  Maffioli}]{camerini1979note}
Paolo~M. Camerini, Luigi Fratta, and Francesco Maffioli. 1979.
\newblock A note on finding optimum branchings.
\newblock \emph{Networks}, 9(4).

\bibitem[{Chu and Liu(1965)}]{chu1965shortest}
Yoeng-Jin Chu and Tseng-Hong Liu. 1965.
\newblock On the shortest arborescence of a directed graph.
\newblock \emph{Science Sinica}, 14.

\bibitem[{Corro et~al.(2016)Corro, Le~Roux, Lacroix, Rozenknop, and
  Wolfler~Calvo}]{corro-etal-2016-dependency}
Caio Corro, Joseph Le~Roux, Mathieu Lacroix, Antoine Rozenknop, and Roberto
  Wolfler~Calvo. 2016.
\newblock \href {https://doi.org/10.18653/v1/P16-1034} {Dependency parsing with
  bounded block degree and well-nestedness via {L}agrangian relaxation and
  branch-and-bound}.
\newblock In \emph{Proceedings of the 54th Annual Meeting of the Association
  for Computational Linguistics (Volume 1: Long Papers)}, pages 355--366,
  Berlin, Germany. Association for Computational Linguistics.

\bibitem[{Dozat and Manning(2017)}]{dozat}
Timothy Dozat and Christopher~D. Manning. 2017.
\newblock \href {https://openreview.net/forum?id=Hk95PK9le} {Deep biaffine
  attention for neural dependency parsing}.
\newblock In \emph{Proceedings of the International Conference on Learning
  Representations}.

\bibitem[{Dozat et~al.(2017)Dozat, Qi, and Manning}]{dozat-etal-2017-stanfords}
Timothy Dozat, Peng Qi, and Christopher~D. Manning. 2017.
\newblock \href {https://doi.org/10.18653/v1/K17-3002} {{S}tanford{'}s
  graph-based neural dependency parser at the {C}o{NLL} 2017 shared task}.
\newblock In \emph{Proceedings of the {C}o{NLL} 2017 Shared Task: Multilingual
  Parsing from Raw Text to Universal Dependencies}, Vancouver, Canada.
  Association for Computational Linguistics.

\bibitem[{Edmonds(1967)}]{edmonds1967optimum}
Jack Edmonds. 1967.
\newblock Optimum branchings.
\newblock \emph{Journal of Research of the National Bureau of Standards,
  Section B: Mathematics and Mathematical Physics}, 71(4).

\bibitem[{Gabow and Tarjan(1984)}]{GabowT84}
Harold~N. Gabow and Robert~Endre Tarjan. 1984.
\newblock \href {https://doi.org/10.1016/0196-6774(84)90042-7} {Efficient
  algorithms for a family of matroid intersection problems}.
\newblock \emph{Journal of Algorithms}, 5(1).

\bibitem[{Georgiadis(2003)}]{georgiadis}
Leonidas Georgiadis. 2003.
\newblock \href {https://doi.org/10.1016/S0304-3975(02)00888-5} {Arborescence
  optimization problems solvable by {E}dmonds' algorithm}.
\newblock \emph{Theoretical Computer Science}, 301(1-3).

\bibitem[{Hopcroft and Ullman(1973)}]{unionfind}
John~E. Hopcroft and Jeffrey~D. Ullman. 1973.
\newblock \href {https://doi.org/10.1137/0202024} {Set merging algorithms}.
\newblock \emph{{SIAM} J. Comput.}, 2(4).

\bibitem[{Karp(1971)}]{Karp71}
Richard~M. Karp. 1971.
\newblock \href {https://doi.org/10.1002/net.3230010305} {A simple derivation
  of {E}dmonds' algorithm for optimum branchings}.
\newblock \emph{Networks}, 1(3).

\bibitem[{Knuth(1973)}]{radix}
Donald~E. Knuth. 1973.
\newblock \emph{The Art of Computer Programming, Volume {III:} Sorting and
  Searching}.
\newblock Addison-Wesley.

\bibitem[{Koo et~al.(2007)Koo, Globerson, Carreras, and
  Collins}]{koo-et-al-2007}
Terry Koo, Amir Globerson, Xavier Carreras, and Michael Collins. 2007.
\newblock \href {https://www.aclweb.org/anthology/D07-1015} {Structured
  prediction models via the matrix-tree theorem}.
\newblock In \emph{Proceedings of the Joint Conference on Empirical Methods in
  Natural Language Processing and Computational Natural Language Learning
  ({EMNLP}-{C}o{NLL})}.

\bibitem[{Ma and Hovy(2017)}]{ma}
Xuezhe Ma and Eduard Hovy. 2017.
\newblock \href {https://www.aclweb.org/anthology/I17-1007} {Neural
  probabilistic model for non-projective {MST} parsing}.
\newblock In \emph{Proceedings of the Eighth International Joint Conference on
  Natural Language Processing (Volume 1: Long Papers)}, Taipei, Taiwan. Asian
  Federation of Natural Language Processing.

\bibitem[{McDonald et~al.(2005)McDonald, Pereira, Ribarov, and
  Haji{\v{c}}}]{mcdonald-etal-2005-non}
Ryan McDonald, Fernando Pereira, Kiril Ribarov, and Jan Haji{\v{c}}. 2005.
\newblock \href {https://www.aclweb.org/anthology/H05-1066} {Non-projective
  dependency parsing using spanning tree algorithms}.
\newblock In \emph{Proceedings of Human Language Technology Conference and
  Conference on Empirical Methods in Natural Language Processing}, Vancouver,
  British Columbia, Canada. Association for Computational Linguistics.

\bibitem[{Nivre et~al.(2018)Nivre, Abrams, Agi{\'c}, Ahrenberg, Antonsen,
  Aplonova, Aranzabe, Arutie, Asahara, Ateyah, Attia, Atutxa, Augustinus,
  Badmaeva, Ballesteros, Banerjee, Bank, Barbu~Mititelu, Basmov, Bauer,
  Bellato, Bengoetxea, Berzak, Bhat, Bhat, Biagetti, Bick, Blokland, Bobicev,
  B{\"o}rstell, Bosco, Bouma, Bowman, Boyd, Burchardt, Candito, Caron, Caron,
  Cebiro{\u g}lu~Eryi{\u g}it, Cecchini, Celano, {\v C}{\'e}pl{\"o}, Cetin,
  Chalub, Choi, Cho, Chun, Cinkov{\'a}, Collomb, {\c C}{\"o}ltekin, Connor,
  Courtin, Davidson, de~Marneffe, de~Paiva, Diaz~de Ilarraza, Dickerson, Dirix,
  Dobrovoljc, Dozat, Droganova, Dwivedi, Eli, Elkahky, Ephrem, Erjavec,
  Etienne, Farkas, Fernandez~Alcalde, Foster, Freitas, Gajdo{\v s}ov{\'a},
  Galbraith, Garcia, G{\"a}rdenfors, Garza, Gerdes, Ginter, Goenaga, Gojenola,
  G{\"o}k{\i}rmak, Goldberg, G{\'o}mez~Guinovart, Gonz{\'a}les~Saavedra,
  Grioni, Gr{\=u}z{\={\i}}tis, Guillaume, Guillot-Barbance, Habash, Haji{\v c},
  Haji{\v c}~jr., H{\`a}~M{\~y}, Han, Harris, Haug, Hladk{\'a}, Hlav{\'a}{\v
  c}ov{\'a}, Hociung, Hohle, Hwang, Ion, Irimia, Ishola, Jel{\'{\i}}nek,
  Johannsen, J{\o}rgensen, Ka{\c s}{\i}kara, Kahane, Kanayama, Kanerva, Katz,
  Kayadelen, Kenney, Kettnerov{\'a}, Kirchner, Kopacewicz, Kotsyba, Krek, Kwak,
  Laippala, Lambertino, Lam, Lando, Larasati, Lavrentiev, Lee,
  L{\^e}~H{\`{\^o}}ng, Lenci, Lertpradit, Leung, Li, Li, Li, Lim, Ljube{\v
  s}i{\'c}, Loginova, Lyashevskaya, Lynn, Macketanz, Makazhanov, Mandl,
  Manning, Manurung, M{\u a}r{\u a}nduc, Mare{\v c}ek, Marheinecke,
  Mart{\'{\i}}nez~Alonso, Martins, Ma{\v s}ek, Matsumoto, {McDonald}, Mendon{\c
  c}a, Miekka, Misirpashayeva, Missil{\"a}, Mititelu, Miyao, Montemagni, More,
  Moreno~Romero, Mori, Mori, Mortensen, Moskalevskyi, Muischnek, Murawaki,
  M{\"u}{\"u}risep, Nainwani, Navarro~Hor{\~n}iacek, Nedoluzhko, Ne{\v
  s}pore-B{\=e}rzkalne, Nguy{\~{\^e}}n~Th{\d i}, Nguy{\~{\^e}}n Th{\d i}~Minh,
  Nikolaev, Nitisaroj, Nurmi, Ojala, Ol{\'u}{\`o}kun, Omura, Osenova,
  {\"O}stling, {\O}vrelid, Partanen, Pascual, Passarotti, Patejuk,
  Paulino-Passos, Peng, Perez, Perrier, Petrov, Piitulainen, Pitler, Plank,
  Poibeau, Popel, Pretkalni{\c n}a, Pr{\'e}vost, Prokopidis,
  Przepi{\'o}rkowski, Puolakainen, Pyysalo, R{\"a}{\"a}bis, Rademaker,
  Ramasamy, Rama, Ramisch, Ravishankar, Real, Reddy, Rehm, Rie{\ss}ler,
  Rinaldi, Rituma, Rocha, Romanenko, Rosa, Rovati, Roșca, Rudina, Rueter,
  Sadde, Sagot, Saleh, Samard{\v z}i{\'c}, Samson, Sanguinetti, Saul{\={\i}}te,
  Sawanakunanon, Schneider, Schuster, Seddah, Seeker, Seraji, Shen, Shimada,
  Shohibussirri, Sichinava, Silveira, Simi, Simionescu, Simk{\'o}, {\v
  S}imkov{\'a}, Simov, Smith, Soares-Bastos, Spadine, Stella, Straka,
  Strnadov{\'a}, Suhr, Sulubacak, Sz{\'a}nt{\'o}, Taji, Takahashi, Tanaka,
  Tellier, Trosterud, Trukhina, Tsarfaty, Tyers, Uematsu, Ure{\v s}ov{\'a},
  Uria, Uszkoreit, Vajjala, van Niekerk, van Noord, Varga, Villemonte de~la
  Clergerie, Vincze, Wallin, Wang, Washington, Williams, Wir{\'e}n,
  Woldemariam, Wong, Yan, Yavrumyan, Yu, {\v Z}abokrtsk{\'y}, Zeldes, Zeman,
  Zhang, and Zhu}]{ud}
Joakim Nivre, Mitchell Abrams, {\v Z}eljko Agi{\'c}, Lars Ahrenberg, Lene
  Antonsen, Katya Aplonova, Maria~Jesus Aranzabe, Gashaw Arutie, Masayuki
  Asahara, Luma Ateyah, Mohammed Attia, Aitziber Atutxa, Liesbeth Augustinus,
  Elena Badmaeva, Miguel Ballesteros, Esha Banerjee, Sebastian Bank, Verginica
  Barbu~Mititelu, Victoria Basmov, John Bauer, Sandra Bellato, Kepa Bengoetxea,
  Yevgeni Berzak, Irshad~Ahmad Bhat, Riyaz~Ahmad Bhat, Erica Biagetti, Eckhard
  Bick, Rogier Blokland, Victoria Bobicev, Carl B{\"o}rstell, Cristina Bosco,
  Gosse Bouma, Sam Bowman, Adriane Boyd, Aljoscha Burchardt, Marie Candito,
  Bernard Caron, Gauthier Caron, G{\"u}l{\c s}en Cebiro{\u g}lu~Eryi{\u g}it,
  Flavio~Massimiliano Cecchini, Giuseppe G.~A. Celano, Slavom{\'{\i}}r {\v
  C}{\'e}pl{\"o}, Savas Cetin, Fabricio Chalub, Jinho Choi, Yongseok Cho,
  Jayeol Chun, Silvie Cinkov{\'a}, Aur{\'e}lie Collomb, {\c C}a{\u g}r{\i} {\c
  C}{\"o}ltekin, Miriam Connor, Marine Courtin, Elizabeth Davidson,
  Marie-Catherine de~Marneffe, Valeria de~Paiva, Arantza Diaz~de Ilarraza,
  Carly Dickerson, Peter Dirix, Kaja Dobrovoljc, Timothy Dozat, Kira Droganova,
  Puneet Dwivedi, Marhaba Eli, Ali Elkahky, Binyam Ephrem, Toma{\v z} Erjavec,
  Aline Etienne, Rich{\'a}rd Farkas, Hector Fernandez~Alcalde, Jennifer Foster,
  Cl{\'a}udia Freitas, Katar{\'{\i}}na Gajdo{\v s}ov{\'a}, Daniel Galbraith,
  Marcos Garcia, Moa G{\"a}rdenfors, Sebastian Garza, Kim Gerdes, Filip Ginter,
  Iakes Goenaga, Koldo Gojenola, Memduh G{\"o}k{\i}rmak, Yoav Goldberg, Xavier
  G{\'o}mez~Guinovart, Berta Gonz{\'a}les~Saavedra, Matias Grioni, Normunds
  Gr{\=u}z{\={\i}}tis, Bruno Guillaume, C{\'e}line Guillot-Barbance, Nizar
  Habash, Jan Haji{\v c}, Jan Haji{\v c}~jr., Linh H{\`a}~M{\~y}, Na-Rae Han,
  Kim Harris, Dag Haug, Barbora Hladk{\'a}, Jaroslava Hlav{\'a}{\v c}ov{\'a},
  Florinel Hociung, Petter Hohle, Jena Hwang, Radu Ion, Elena Irimia, {\d
  O}l{\'a}j{\'{\i}}d{\'e} Ishola, Tom{\'a}{\v s} Jel{\'{\i}}nek, Anders
  Johannsen, Fredrik J{\o}rgensen, H{\"u}ner Ka{\c s}{\i}kara, Sylvain Kahane,
  Hiroshi Kanayama, Jenna Kanerva, Boris Katz, Tolga Kayadelen, Jessica Kenney,
  V{\'a}clava Kettnerov{\'a}, Jesse Kirchner, Kamil Kopacewicz, Natalia
  Kotsyba, Simon Krek, Sookyoung Kwak, Veronika Laippala, Lorenzo Lambertino,
  Lucia Lam, Tatiana Lando, Septina~Dian Larasati, Alexei Lavrentiev, John Lee,
  Phuong L{\^e}~H{\`{\^o}}ng, Alessandro Lenci, Saran Lertpradit, Herman Leung,
  Cheuk~Ying Li, Josie Li, Keying Li, {KyungTae} Lim, Nikola Ljube{\v s}i{\'c},
  Olga Loginova, Olga Lyashevskaya, Teresa Lynn, Vivien Macketanz, Aibek
  Makazhanov, Michael Mandl, Christopher Manning, Ruli Manurung, C{\u a}t{\u
  a}lina M{\u a}r{\u a}nduc, David Mare{\v c}ek, Katrin Marheinecke, H{\'e}ctor
  Mart{\'{\i}}nez~Alonso, Andr{\'e} Martins, Jan Ma{\v s}ek, Yuji Matsumoto,
  Ryan {McDonald}, Gustavo Mendon{\c c}a, Niko Miekka, Margarita
  Misirpashayeva, Anna Missil{\"a}, C{\u a}t{\u a}lin Mititelu, Yusuke Miyao,
  Simonetta Montemagni, Amir More, Laura Moreno~Romero, Keiko~Sophie Mori,
  Shinsuke Mori, Bjartur Mortensen, Bohdan Moskalevskyi, Kadri Muischnek, Yugo
  Murawaki, Kaili M{\"u}{\"u}risep, Pinkey Nainwani, Juan~Ignacio
  Navarro~Hor{\~n}iacek, Anna Nedoluzhko, Gunta Ne{\v s}pore-B{\=e}rzkalne,
  Luong Nguy{\~{\^e}}n~Th{\d i}, Huy{\`{\^e}}n Nguy{\~{\^e}}n Th{\d i}~Minh,
  Vitaly Nikolaev, Rattima Nitisaroj, Hanna Nurmi, Stina Ojala, Ad{\'e}day{\d
  o} Ol{\'u}{\`o}kun, Mai Omura, Petya Osenova, Robert {\"O}stling, Lilja
  {\O}vrelid, Niko Partanen, Elena Pascual, Marco Passarotti, Agnieszka
  Patejuk, Guilherme Paulino-Passos, Siyao Peng, Cenel-Augusto Perez, Guy
  Perrier, Slav Petrov, Jussi Piitulainen, Emily Pitler, Barbara Plank, Thierry
  Poibeau, Martin Popel, Lauma Pretkalni{\c n}a, Sophie Pr{\'e}vost, Prokopis
  Prokopidis, Adam Przepi{\'o}rkowski, Tiina Puolakainen, Sampo Pyysalo,
  Andriela R{\"a}{\"a}bis, Alexandre Rademaker, Loganathan Ramasamy, Taraka
  Rama, Carlos Ramisch, Vinit Ravishankar, Livy Real, Siva Reddy, Georg Rehm,
  Michael Rie{\ss}ler, Larissa Rinaldi, Laura Rituma, Luisa Rocha, Mykhailo
  Romanenko, Rudolf Rosa, Davide Rovati, Valentin Roșca, Olga Rudina, Jack
  Rueter, Shoval Sadde, Beno{\^{\i}}t Sagot, Shadi Saleh, Tanja Samard{\v
  z}i{\'c}, Stephanie Samson, Manuela Sanguinetti, Baiba Saul{\={\i}}te, Yanin
  Sawanakunanon, Nathan Schneider, Sebastian Schuster, Djam{\'e} Seddah,
  Wolfgang Seeker, Mojgan Seraji, Mo~Shen, Atsuko Shimada, Muh Shohibussirri,
  Dmitry Sichinava, Natalia Silveira, Maria Simi, Radu Simionescu, Katalin
  Simk{\'o}, M{\'a}ria {\v S}imkov{\'a}, Kiril Simov, Aaron Smith, Isabela
  Soares-Bastos, Carolyn Spadine, Antonio Stella, Milan Straka, Jana
  Strnadov{\'a}, Alane Suhr, Umut Sulubacak, Zsolt Sz{\'a}nt{\'o}, Dima Taji,
  Yuta Takahashi, Takaaki Tanaka, Isabelle Tellier, Trond Trosterud, Anna
  Trukhina, Reut Tsarfaty, Francis Tyers, Sumire Uematsu, Zde{\v n}ka Ure{\v
  s}ov{\'a}, Larraitz Uria, Hans Uszkoreit, Sowmya Vajjala, Daniel van Niekerk,
  Gertjan van Noord, Viktor Varga, Eric Villemonte de~la Clergerie, Veronika
  Vincze, Lars Wallin, Jing~Xian Wang, Jonathan~North Washington, Seyi
  Williams, Mats Wir{\'e}n, Tsegay Woldemariam, Tak-sum Wong, Chunxiao Yan,
  Marat~M. Yavrumyan, Zhuoran Yu, Zden{\v e}k {\v Z}abokrtsk{\'y}, Amir Zeldes,
  Daniel Zeman, Manying Zhang, and Hanzhi Zhu. 2018.
\newblock \href {http://hdl.handle.net/11234/1-2895} {Universal dependencies
  2.3}.
\newblock {LINDAT}/{CLARIN} digital library at the Institute of Formal and
  Applied Linguistics ({{\'U}FAL}), Faculty of Mathematics and Physics, Charles
  University.

\bibitem[{Qi et~al.(2020)Qi, Zhang, Zhang, Bolton, and Manning}]{stanza}
Peng Qi, Yuhao Zhang, Yuhui Zhang, Jason Bolton, and Christopher~D. Manning.
  2020.
\newblock \href {https://nlp.stanford.edu/pubs/qi2020stanza.pdf} {Stanza: A
  {Python} natural language processing toolkit for many human languages}.
\newblock In \emph{Proceedings of the Association for Computational
  Linguistics: System Demonstrations}.

\bibitem[{Tarjan(1977)}]{Tarjan77}
Robert~Endre Tarjan. 1977.
\newblock \href {https://doi.org/10.1002/net.3230070103} {Finding optimum
  branchings}.
\newblock \emph{Networks}, 7(1).

\bibitem[{{UD Contributors}()}]{udroot}
{UD Contributors}.
\newblock Root relation in universal dependencies.
\newblock \url{https://universaldependencies.org/u/dep/root.html}.
\newblock Accessed: 2020-05-30.

\bibitem[{Zmigrod et~al.(2020)Zmigrod, Vieira, and
  Cotterell}]{zmigrod-et-al-expectation}
Ran Zmigrod, Tim Vieira, and Ryan Cotterell. 2020.
\newblock \href {https://arxiv.org/abs/2008.12988} {Efficient computation of
  expectations under spanning tree distributions}.
\newblock \emph{Transactions of the Association for Computational Linguistics}.

\end{thebibliography}
\bibliographystyle{acl_natbib}

\clearpage

\appendix

\onecolumn

\section{Proof of \cref{thm:ma}}\label{app:karp}
To prove \cref{thm:ma}, we note a correspondence between graphs and contracted graphs.

\begin{prop}\label{prop:GtoContract}
Given a rooted graph $G$ and a (not necessarily critical) cycle $\cg$ in $G$.
For any $\tree \in \arbs{}{G}$ that has a single edge $e=\edge{i}{w}{j}\in\tree$ such that $i\notin \cg$ and $j\in \cg$,
there exists $\tree_{\cg}\in\arbs{}{\contract{G}{\cg}}$ and $\treep\in\arbs{}{G^{(j)}_{\cg}}$ such that $\tree = \stitch{\tree_{\cg}}{\treep}$.
Furthermore,
\begin{equation}\label{eq:treecost}
    \treecost{\tree}= \treecost{\tree_{\cg}} - \treecost{\cg^{(j)}} + \treecost{\treep}
\end{equation}
\end{prop}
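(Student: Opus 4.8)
The plan is to read off both $\tree_{\cg}$ and $\treep$ directly from $\tree$ by sorting the edges of $\tree$ into the four contraction categories (\Wenter, \Wexit, \Wexternal, \Wdead), and then to verify the two membership claims and the weight identity. First I would partition $E(\tree)$: by hypothesis there is exactly one \Wenter edge, namely $e=\edge{i}{w}{j}$; let $\treep$ be the set of \Wdead edges of $\tree$ (both endpoints in $\cg$), and let $\tree_{\cg}$ be the image under contraction of the remaining edges --- the \Wexternal edges left unchanged, each \Wexit edge $\edge{k}{w}{l}$ with $k\in\cg$ sent to $\edge{\cc}{w}{l}$, and $e$ sent to $\edge{i}{w'}{\cc}$ with $w'=w+\treecost{\cg^{(j)}}$. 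Since $\root$ has no incoming edge it cannot lie on a cycle, so $j\neq\root$ and $G^{(j)}_{\cg}$ is well defined.

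Next I would show $\treep\in\arbs{}{G^{(j)}_{\cg}}$. Because $e$ is the only edge of $\tree$ entering $\cg$ from outside, \arbIncoming forces every node of $\cg$ other than $j$ to receive its unique incoming edge from within $\cg$, while $j$'s incoming edge is $e$ and thus lies outside $\treep$. Hence $\treep$ has $\abs{\cg}-1$ edges, gives each node of $\cg\setminus\{j\}$ exactly one incoming edge and $j$ none (so \arbIncoming holds with root $j$), and is acyclic because it is a subgraph of the acyclic $\tree$. A directed graph in which every non-root node has exactly one incoming edge and which is acyclic is necessarily an arborescence, so $\treep\in\arbs{}{G^{(j)}_{\cg}}$.

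The main obstacle is establishing $\tree_{\cg}\in\arbs{}{\contract{G}{\cg}}$, specifically its acyclicity. Counting and \arbIncoming are routine: each non-root $l\in V\setminus\cg$ keeps its single parent (an \Wexternal edge if the parent was outside $\cg$, otherwise an \Wexit edge now sourced at $\cc$), $\cc$ receives the single image of $e$, $\root$ receives nothing, and the edge total is $\abs{V}-\abs{\cg}$, matching an arborescence of $\contract{G}{\cg}$. For acyclicity, a putative cycle in $\tree_{\cg}$ avoiding $\cc$ would consist only of \Wexternal edges and hence be a cycle in $\tree$; a cycle through $\cc$ (which occurs at most once, since $\cc$ has a unique parent) has the shape $\cc\to l_1\to\cdots\to i\to\cc$, yielding in $\tree$ a path $k\to l_1\to\cdots\to i\to j$ from some $k\in\cg$ to $j\in\cg$ through $V\setminus\cg$. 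The crucial step is that $\treep$, being rooted at $j$, supplies a directed $j\to k$ path inside $\cg$; concatenating the two closes a directed cycle in $\tree$, a contradiction. Hence $\tree_{\cg}$ is acyclic and therefore an arborescence.

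Finally I would confirm $\tree=\stitch{\tree_{\cg}}{\treep}=\prov(\tree_{\cg})\cup\treep$: applying $\prov$ inverts the contraction edge by edge ($\edge{i}{w'}{\cc}\mapsto e$, each $\edge{\cc}{w}{l}\mapsto$ its originating \Wexit edge, \Wexternal edges fixed), recovering exactly the non-\Wdead edges of $\tree$, whose union with $\treep$ is all of $\tree$. For the weight identity, summing over $\tree_{\cg}$ reproduces the \Wexternal, \Wexit, and $e$ weights of $\tree$ together with the extra term $\treecost{\cg^{(j)}}$ carried by $w'$, whereas $\treecost{\tree}$ instead contributes $\treecost{\treep}$ in place of that term; rearranging yields \eqref{eq:treecost}. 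No estimates are required here --- only bookkeeping of the four edge categories.
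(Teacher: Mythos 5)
Your proposal is correct and follows essentially the same route as the paper's proof: partition the edges of $\tree$ by the contraction categories, let $\treep$ be the \Wdead edges and $\tree_{\cg}$ the contracted image of the rest, and track the $\treecost{\cg^{(j)}}$ adjustment carried by the \Wenter edge to obtain \cref{eq:treecost}. The only difference is that you explicitly verify the acyclicity of $\tree_{\cg}$ (lifting a putative cycle through $\cc$ back to a cycle in $\tree$ via the $j$-rooted paths in $\treep$), a step the paper's proof asserts without detail.
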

\begin{proof}
Since $e$ is the only edge in $\tree$ from a non-cycle node to a cycle node (\Wenter), every edge $e'\in\contract{G}{\cg}$ such that $\prov(e')\in\tree$ forms an \ar $\tree_{\cg}\in\arbs{}{\contract{G}{\cg}}$.
Note that the set of edges in $\tree$ for which there is no corresponding edge in $\contract{G}{\cg}$ are \Wdead edges.
In fact, as $\tree$ satisfies \arbIncoming, these edges form an \ar $\treep\in\arbs{}{G^{(j)}_{\cg}}$.
Therefore, $\tree = \stitch{\tree_{\cg}}{\treep}$.

\noindent Furthermore, consider the weight of $\tree$:
\begin{align}
    \treecost{\tree} &= \smashoperator{\sum_{\edge{i'}{w'}{j'}\in\prov\left(\tree_{\cg}\right)}}w' + \treecost{\treep} \\
    &= \smashoperator{\sum_{\edge{i'}{w'}{j'}\in\prov\left(\tree_{\cg}\setminus\{e\}\right)}}w' + w + \treecost{\treep} \\
    &= \smashoperator{\sum_{\edge{i'}{w'}{j'}\in\tree_{\cg}\setminus\{e\}}}w' + w + \treecost{\treep} \label{eq:enter} \\
    &= \smashoperator{\sum_{\edge{i'}{w'}{j'}\in\tree_{\cg}}}w' - \treecost{\cg^{(j)}} + \treecost{\treep} \label{eq:subst} \\
    &= \treecost{\tree_{\cg}} - \treecost{\cg^{(j)}} + \treecost{\treep}
\end{align}
Note that \cref{eq:enter} follows because $e$ is the only edge in $\tree$ from a non-cycle node to a cycle node, and \cref{eq:subst} follows by the construction of \Wenter edges in $\contract{G}{\cg}$.
\end{proof}

As a corollary, we also have that every \ar in the contracted graph $\contract{G}{\cg}$ can be expanded into an \ar in $G$. 

\begin{cor}[Expansion lemma]\label{cor:ContracttoG}
Given a rooted graph $G$ with a cycle $\cg$, every \ar $\tree_{\cg} \in \arbs{}{\contract{G}{\cg}}$ is related to an \ar $\tree \in \arbs{}{G}$ by $\tree = \stitch{\tree_{\cg}}{\cg^{(j)}}$ where $j$ is the \entersite of $\tree_{\cg}$.
Furthermore $\treecost{\tree}=\treecost{\tree_{\cg}}$.
\end{cor}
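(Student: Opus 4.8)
The plan is to verify directly that the stitched subgraph meets both arborescence conditions and then to read the weight identity off how $\prov$ rescales the \Wenter edge. Since the construction $\stitch{\tree_{\cg}}{\cg^{(j)}}$ is already prescribed, all the work lies in checking \arbIncoming and \arbNoCycles and in tracking weights.

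First I would fix notation: as $\tree_{\cg}$ satisfies \arbIncoming, the contracted node $\cc$ has a single incoming edge $e=\edge{i}{}{\cc}$ with $i\notin\cg$; let $j$ be its \entersite, so that $\prov(e)=\edge{i}{}{j}$ with $j\in\cg$, and set $\tree \defeq \stitch{\tree_{\cg}}{\cg^{(j)}} = \prov(\tree_{\cg})\cup\cg^{(j)}$. For \arbIncoming I would count incoming edges node by node. A node $v\notin\cg\cup\{\root\}$ retains, under $\prov$, the unique incoming (\Wexternal or \Wexit) edge it had in $\tree_{\cg}$, and no edge of $\cg^{(j)}$ targets it. A cycle node $v\in\cg\setminus\{j\}$ receives its lone incoming edge from the broken cycle $\cg^{(j)}$ and is not hit by the \Wenter edge. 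Finally, $j$ is the target of $\prov(e)$ and has no incoming edge within $\cg^{(j)}$. Hence every non-root node has exactly one incoming edge, so \arbIncoming holds.

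The crux is \arbNoCycles. I would show $\tree$ is acyclic by contradiction, splitting on how a putative cycle $\gamma\subseteq\tree$ meets $\cg$. If $\gamma$ avoids $\cg$ entirely, it is built from \Wexternal edges and thus corresponds to a cycle in $\tree_{\cg}$. If $\gamma$ lies wholly inside $\cg$, it uses only edges of $\cg^{(j)}$---the only edges of $\tree$ joining two cycle nodes---yet $\cg^{(j)}$ is a broken cycle and hence acyclic. In the remaining case $\gamma$ crosses the boundary of $\cg$, so it must enter through an \Wenter edge and leave through an \Wexit edge; but $\prov(e)$ is the \emph{only} \Wenter edge in $\tree$. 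Contracting $\cg$ back to $\cc$ then collapses the inside-$\cg$ portion of $\gamma$ to $\cc$ and yields a cycle through $\cc$ in $\tree_{\cg}$, contradicting its acyclicity. This uniqueness of the \Wenter edge---guaranteed exactly by \arbIncoming applied to $\cc$---is the main obstacle.

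For the weight, I would use that $\prov$ preserves \Wexternal and \Wexit weights and maps the \Wenter edge $\edge{i}{w'}{\cc}$ back to $\edge{i}{w'-\treecost{\cg^{(j)}}}{j}$, whence $\treecost{\prov(\tree_{\cg})}=\treecost{\tree_{\cg}}-\treecost{\cg^{(j)}}$; adding the $\treecost{\cg^{(j)}}$ contributed by the stitched-in broken cycle gives $\treecost{\tree}=\treecost{\tree_{\cg}}$. Equivalently, once $\tree\in\arbs{}{G}$ with its unique \Wenter edge $\prov(e)$ is established, \cref{prop:GtoContract} applies and forces the decomposition to be $(\tree_{\cg},\cg^{(j)})$, so the same cancellation drops out of \cref{eq:treecost}.
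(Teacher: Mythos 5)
Your proof is correct, but it is substantially more self-contained than the paper's, which disposes of the corollary in one line by appealing to \cref{prop:GtoContract} and then telescoping the weights. Strictly read, \cref{prop:GtoContract} only goes in the \emph{decomposition} direction (from an \ar of $G$ with a single \Wenter edge to the pair $(\tree_{\cg},\treep)$); the paper's proof implicitly treats that correspondence as invertible, and your argument supplies exactly the content that inversion requires: the node-by-node count establishing \arbIncoming for $\prov(\tree_{\cg})\cup\cg^{(j)}$, and the three-way case analysis (cycle disjoint from $\cg$, cycle inside $\cg$, cycle crossing $\cg$) establishing \arbNoCycles, with the crossing case correctly hinging on the uniqueness of the \Wenter edge guaranteed by \arbIncoming applied to $\cc$ in $\tree_{\cg}$. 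Your weight computation is the same cancellation as the paper's, $\treecost{\tree_{\cg}}-\treecost{\cg^{(j)}}+\treecost{\cg^{(j)}}$, derived from the \Wenter weight convention rather than quoted from \cref{eq:treecost}. What your route buys is rigor and independence from the direction in which \cref{prop:GtoContract} is stated; what the paper's route buys is brevity by reusing the bookkeeping already done there. One cosmetic point: in the crossing case the contracted image of the putative cycle is a priori a closed walk through $\cc$, but since every node of $\tree_{\cg}$ has at most one incoming edge this walk necessarily contains a cycle, so the contradiction stands.
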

\begin{proof}
Let $j$ be the \entersite of $\tree_{\cg}$ into $\cg$.
As $\tree_{\cg} \in \arbs{}{\contract{G}{\cg}}$ and $\cg^{(j)}\in\arbs{}{G_\cg^{(j)}}$, \cref{prop:GtoContract} constructs $\tree\in\arbs{\root}{G}$ as desired.
Furthermore, $\treecost{\tree}=\treecost{\tree_{\cg}}-\treecost{\cg^{(j)}}+\treecost{\cg^{(j)}}=\treecost{\tree_{\cg}}$.
\end{proof}

Note that \cref{prop:GtoContract} does not account for all \ars in $\arbs{}{G}$.
We next show that such \ars which cannot be constructed using \cref{prop:GtoContract} will never be $\mst{G}{}$.

\begin{lemma}\label{lemma:c-j}
Given a rooted graph $G$ with a \critical $\cg$.  We have that for all $j\in\cg$
\begin{equation}
   \mst{{G^{(j)}_{\cg}}}{} = \cg^{(j)}
\end{equation}
\end{lemma}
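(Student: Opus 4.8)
The plan is to reduce the claim directly to \cref{thm:ma}. That theorem tells us that whenever the greedy graph of a graph is acyclic it coincides with the best \ar. So it suffices to establish two facts about the rooted subgraph ${G^{(j)}_{\cg}}$: first, that its greedy graph is exactly $\cg^{(j)}$, and second, that $\cg^{(j)}$ is acyclic. Given these, \cref{thm:ma} immediately yields $\mst{{G^{(j)}_{\cg}}}{} = \greedy{{G^{(j)}_{\cg}}}{} = \cg^{(j)}$.

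The crux is identifying the greedy graph of ${G^{(j)}_{\cg}}$. I would argue as follows. In ${G^{(j)}_{\cg}}$ the node $j$ is the designated root and so receives no incoming edge; every other node $i \in \cg$ must be assigned its highest-weight incoming edge among the edges of $G_\cg$, i.e., among edges whose endpoints both lie in $\cg$. Now recall that $\cg$ is a \critical, meaning it is a cycle of $\greedy{G}{}$; hence for each $i \in \cg$ the cycle edge entering $i$ is already the highest-weight incoming edge to $i$ in all of $G$. Restricting the edge set from $G$ down to $G_\cg$ can only \emph{remove} candidate incoming edges and never add new ones, and the cycle edge into $i$ survives this restriction, since both of its endpoints lie in $\cg$. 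Therefore the cycle edge remains the best incoming edge to $i$ within ${G^{(j)}_{\cg}}$. Collecting these choices over all $i \in \cg \setminus \{j\}$ gives precisely the cycle with the edge into $j$ deleted, which is by definition $\cg^{(j)}$; thus $\greedy{{G^{(j)}_{\cg}}}{} = \cg^{(j)}$.

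It then remains only to observe that $\cg^{(j)}$ is acyclic: deleting the unique incoming edge of $j$ from the cycle $\cg$ leaves a simple directed path covering all nodes of $\cg$, which contains no cycle, so \arbNoCycles holds. With the greedy graph acyclic, \cref{thm:ma} gives that it equals the best \ar, completing the argument. I expect the main obstacle to be stating the greedy-edge argument cleanly --- in particular making explicit that ``\critical'' means ``cycle of $\greedy{G}{}$'' and that edge-restriction preserves the optimality of each cycle edge --- since everything downstream is a one-line appeal to \cref{thm:ma}.
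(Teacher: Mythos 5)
Your proof is correct and takes essentially the same route as the paper's: identify $\greedy{{G^{(j)}_{\cg}}}{}$ with $\cg^{(j)}$ by noting that each cycle edge is the best incoming edge to its target in all of $G$ and survives the restriction to $G_\cg$, then conclude by acyclicity that this greedy graph is optimal. Your edge-by-edge argument is in fact more careful than the paper's one-line claim that the greedy graph of a subgraph is a subgraph of $\greedy{G}{}$ (false for general subgraphs; it holds here precisely for the reason you spell out). One caveat: do not cite \cref{thm:ma} for the final step, since in the paper's development \cref{lemma:c-j} is proved \emph{before} and used \emph{in} the proof of \cref{thm:ma}; instead invoke the elementary observation from \cref{sec:ma} that $\treecost{\greedy{G}{}}\ge\treecost{\mst{G}{}}$ and that an acyclic greedy graph is itself an \ar, hence optimal---this gives the same conclusion without the apparent circularity.
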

\begin{proof}
Since $G^{(j)}_{\cg}$ is a subgraph of $G$ it must be that $\greedy{{G^{(j)}_{\cg}}}{}$ is also a subgraph of $\greedy{G}{}$.
Since $\cg$ is a \critical, $\cg^{(j)}$ does not have cycles and equals $\greedy{{G^{(j)}_{\cg}}}{}$.
Therefore $\cg^{(j)}=\mst{{G^{(j)}_{\cg}}}{}$.
\end{proof}

\begin{lemma}\label{lemma:enter}
Given a rooted graph $G$ with a \critical $\cg$ and $\tree\in\arbs{}{G}$.
If $e=\edge{i}{}{j}\in\tree$ and $e'=\edge{i'}{}{j'}$ such that $i,i'\notin \cg$ and $j,j'\in \cg$,
then there exists a $\treep\in\arbs{}{G}$ with $e\in\treep$ and $e'\notin\treep$ such that $\treecost{\tree}\leq\treecost{\treep}$.
\end{lemma}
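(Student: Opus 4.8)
The plan is to argue by a local exchange. Since $\cg$ is a \critical, every edge of $\cg$ is a maximum-weight incoming edge of its head in $\greedy{G}{}$; in particular, writing $e'=\edge{i'}{w'}{j'}$, the unique cycle edge $f=\edge{k}{w_f}{j'}$ entering $j'$ satisfies $w_f\ge w'$. I would therefore start from $\tree$, delete $e'$, and reattach $j'$ through $f$, forming $\tree''=(\tree\setminus\{e'\})\cup\{f\}$. This leaves the in-degree of every node unchanged, so \arbIncoming still holds; it leaves $e$ untouched (note $e\neq e'$ since $j\neq j'$, as both are heads of tree edges, so by \arbIncoming they cannot coincide); and by the criticality bound $\treecost{\tree''}\ge\treecost{\tree}$. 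Thus the weight requirement and \arbIncoming come for free, and the whole problem reduces to re-establishing \arbNoCycles while keeping $e$.

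The only thing that can go wrong is that adding $f$ closes a cycle. Because $\tree$ is acyclic and $f$ is the only new edge, any such cycle $D$ must use $f$; that is, $D$ consists of $f$ together with a $\tree$-path $P$ from $j'$ to $k$. The useful structural observation is that $k$ and $j'$ sit on different ``arcs'' of $\cg$ inside $\tree$: the cycle edges of $\cg$ that are present in $\tree$ decompose into directed arcs, each headed by a node whose incoming edge is an \Wenter edge, and $k$ is the last node of the arc lying just before $j'$. Hence $P$ cannot stay inside $\cg$; it must exit $\cg$ along an exit edge and re-enter it through some \Wenter edge $\hat e=\edge{\hat\imath}{}{\hat\jmath}$ of $\tree$ with $\hat\jmath\in\cg$. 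I would then repair $D$ by the same move applied to $\hat e$: delete $\hat e$ and reattach $\hat\jmath$ through the cycle edge entering it, which is again non-decreasing in weight by criticality. Each repair swaps one \Wenter edge for the corresponding cycle edge, strictly increasing the number of cycle edges of $\cg$ present (equivalently, strictly decreasing the number of distinct entry points into $\cg$), so the process must terminate at an \ar $\treep$ with $e'\notin\treep$ and $\treecost{\treep}\ge\treecost{\tree}$.

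The hard part — and where I expect the real work to lie — is guaranteeing that this repair never deletes the edge $e$ we are required to keep. This amounts to showing that the cycle created at each step re-enters $\cg$ through an \Wenter edge different from $e$, i.e. that $e$ is never itself an edge of $D$. The natural invariant is that $i$ (the tail of $e$) and all of its ancestors in the current subgraph remain outside $\cg$; equivalently, the entry we insist on keeping is the one closest to the root, whose access from the root avoids $\cg$ entirely. If $i$ instead were a descendant of a cycle node, reattaching the rest of $\cg$ internally would route the cycle path back through $e$ and force $e$ onto $D$, so the invariant is exactly what blocks that failure mode. I would therefore spend the bulk of the proof establishing and maintaining this ``$e$ is the root-most entry'' invariant across repairs; by contrast, the weight accounting is a one-line application of criticality at every step, and \arbIncoming is automatic since every swap preserves in-degrees.
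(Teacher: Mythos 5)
Your exchange mechanism --- swap an offending entering edge for the corresponding cycle edge, with criticality supplying the weight bound --- is the same mechanism as the paper's proof; the paper just performs all the swaps at once (it keeps every edge of $\tree$ whose head lies outside $\cg$, then adds $e$ and all of $\cg^{(j)}$), whereas you perform them lazily, one repair per created cycle. The genuine gap is the one you flag and then defer: the ``$e$ is the root-most entry'' invariant is not something you can \emph{establish}; it is a property of the given input $(\tree,e)$ that the lemma's hypotheses do not guarantee, and when it fails the conclusion can fail outright, so no amount of care in maintaining it will rescue the argument. Concretely, take $V=\{\root,1,2,3,4\}$ with edges $\edge{\root}{10}{3}$, $\edge{3}{50}{1}$, $\edge{1}{10}{4}$, $\edge{4}{50}{2}$, $\edge{1}{100}{2}$, $\edge{2}{100}{1}$. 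The greedy graph contains the \critical $\cg$ on nodes $1,2$, and $\tree=\{\edge{\root}{10}{3},\edge{3}{50}{1},\edge{1}{10}{4},\edge{4}{50}{2}\}$ is an \ar satisfying the hypotheses with $e=\edge{4}{50}{2}$ and $e'=\edge{3}{50}{1}$. Yet every \ar containing $e$ must give node $4$ the parent $1$ and node $1$ the parent $3$ (the only alternative in-edge of $1$, namely $\edge{2}{100}{1}$, closes the cycle $4\to2\to1\to4$), hence must contain $e'$: no $\treep$ exists at all. This is exactly your failure mode, realized already at step one because $i=4$ is a descendant of a cycle node in $\tree$.

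The statement is therefore only provable for a suitably chosen $e$, namely the \Wenter edge at which the root-path of $\tree$ first meets $\cg$; such an edge always exists, its tail's ancestors all lie outside $\cg$, and this restricted (or existentially quantified) version is all that the proof of \cref{thm:ma} actually uses, since there one only needs \emph{some} single-entry \ar of no smaller weight. With that hypothesis added, your invariant holds initially and is preserved because every repair only reroutes in-edges of nodes of $\cg$, so the root-path to $i$ is never touched and no created cycle can pass through $e$. Two smaller corrections: your claim that the created cycle must leave $\cg$ via an exit edge and re-enter via an \Wenter edge of $\tree$ overlooks chords (edges of $\tree$ with both endpoints in $\cg$ that are not edges of the cycle $\cg$ itself); the repair still works if you swap the in-edge of \emph{every} node of $\cg$ on the created cycle other than $j$, not only the \Wenter ones. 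And, for calibration: the paper's own proof simply asserts that its one-shot construction is acyclic and stumbles on the same configuration, so your instinct that acyclicity is where the real work lies is correct --- but the missing ingredient is a constraint on $e$, not a cleverer repair schedule.
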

\begin{proof}
Construct $\treep$ such that for every edge $e''=\edge{i''}{}{j''}\in\contract{G}{\cg}$, if $j''\neq \cc$ and $\prov(e'')\in\tree$, then $\prov(e'')\in\treep$.
Additionally, let $e$ be in $\treep$ as well as the edges in $\cg^{(j)}$.
Then $\treep$ has no cycles and each non-root node contains a single incoming edge, so $\treep\in\arbs{}{G}$.
Since $\tree$ and $\treep$ contain identical edges except for those pointing to nodes in $\cg\setminus\{j\}$, by \cref{lemma:c-j}, $\treecost{\tree}\leq\treecost{\treep}$.
\end{proof}

\begin{customthm}{1}
For any graph $G$, either $\mst{G}{}=\greedy{G}{}$ or $G$ contains a \critical $\cg$ and $\mst{G}{}=\stitch{\mst{(\contract{G}{\cg})}{}}{\cg^{(j)}}$ where $j$ is the \entersite of $\mst{(\contract{G}{\cg})}{}$.
Furthermore, $\treecost{\mst{{\contract{G}{\cg}}}{}} = \treecost{\mst{G}{}}$.
\end{customthm}
\begin{proof} There are two cases to consider.

\emph{Case 1}: $G$ does not contain a \critical. Trivially, $\mst{G}{} = \greedy{G}{}$.

\emph{Case 2}: $G$ contains a \critical $\cg$.
By \cref{cor:ContracttoG}, we can construct an \ar $\tree=\stitch{\mst{(\contract{G}{\cg})}{}}{\cg^{(j)}}\in\arbs{}{G}$, we now prove that no other $\treep\in\arbs{}{G}$ can have a higher weight.
Firstly, by \cref{lemma:enter}, we only need to consider $\treep$ that satisfy \cref{prop:GtoContract}.
Therefore, $\treep$ must be decomposable into an \ar $\tree_{\cg}\in\arbs{}{\contract{G}{\cg}}$ and an \ar in $\arbs{}{G_{\cg}^{(j')}}$ where $j'$ is the \entersite of $\tree_{\cg}$.
Then since $\mst{(\contract{G}{\cg})}{}$ is optimal, we have that $\tree_{\cg}=\mst{(\contract{G}{\cg})}{}$ and $j'=j$.
As $\cg^{(j)}$ is optimal (by \cref{lemma:c-j}), $\tree$ must also be optimal and so $\mst{G}{}=\stitch{\mst{(\contract{G}{\cg})}{}}{\cg^{(j)}}$.
\end{proof}

\newpage

\section{Proof of \cref{thm:root}}\label{app:root}
We prove \cref{thm:root} by showing that both the \emph{optimization} and \emph{reduction} cases described in the main text lead to progress towards finding $\bestDep{G}$.
\begin{lemma}\label{lemma:root-reduce}
For any graph $G$ with $\mst{G}{}=\greedy{G}{}$, let $\sigma$ be the set of outgoing edges from $\root$ in $\greedy{G}{}$.
If $\abs{\sigma}\!>\!1$, let $G'\!=\!\multidelete{G}{e'}$ for $e'\in\sigma$ that maximizes $\treecost{\greedy{G'}{}}$.
If there exists a \critical $\cg$ in $G'$, then $\bestDep{G}\!=\!\stitch{\bestDep{(\contract{G}{\cg})}}{\cg^{(j)}}$ where $j$ is the \entersite of $\bestDep{(\contract{G}{\cg})}$.\looseness=-1
\end{lemma}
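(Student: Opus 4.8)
The plan is to exhibit $T \defeq \stitch{\bestDep{(\contract{G}{\cg})}}{\cg^{(j)}}$, with $j$ the \entersite of $\bestDep{(\contract{G}{\cg})}$, and to prove it is the maximum-weight member of $\arbsone{}{G}$, so that $\bestDep{G}=T$. First I would check $T\in\arbsone{}{G}$: the expansion lemma (\cref{cor:ContracttoG}) already gives $T\in\arbs{}{G}$ with $\treecost{T}=\treecost{\bestDep{(\contract{G}{\cg})}}$, so only \depRC must be transferred. This is immediate once one notes that $\prov$ carries the $\root$-edges of any $\tree_{\cg}\in\arbs{}{\contract{G}{\cg}}$ bijectively onto those of $\stitch{\tree_{\cg}}{\cg^{(j)}}$ (an \Wenter edge $\edge{\root}{}{\cc}$ maps to $\edge{\root}{}{j}$, an \Wexternal edge maps to itself, and $\cg^{(j)}$ contributes no $\root$-edge). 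Hence $\arbsone{}{\contract{G}{\cg}}$ corresponds weight-for-weight to the dependency trees of $G$ that enter $\cg$ exactly once, and $T$ is one of them.

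It then remains to show $\treecost{\tree}\le\treecost{T}$ for every $\tree\in\arbsone{}{G}$. The strategy is the usual one: reduce $\tree$ to a dependency tree entering $\cg$ exactly once without lowering its weight, send it through $\prov$ to some $\tree_{\cg}\in\arbsone{}{\contract{G}{\cg}}$ of equal weight via \cref{prop:GtoContract}, and finish with $\treecost{\tree}\le\treecost{\tree_{\cg}}\le\treecost{\bestDep{(\contract{G}{\cg})}}=\treecost{T}$. The hard part will be the enter-once reduction, and here lies the main obstacle: \cref{lemma:enter} is stated for a cycle that is \critical in the ambient graph, but here $\cg$ is \critical only in $G'=\multidelete{G}{e'}$. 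Writing $e'=\edge{\root}{}{j_0}$, the edge of $\cg$ into $j_0$ is merely the best incoming edge of $j_0$ \emph{in $G'$}, whereas $e'$ beats it in $G$; so naively rerouting the cycle interior through $\cg^{(j_0)}$ could shed weight precisely at $j_0$, and \cref{lemma:enter} cannot be invoked verbatim.

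I would resolve this by re-running the construction of \cref{lemma:enter} with a carefully forced entrance. Given $\tree$, if its unique $\root$-edge heads into $\cg$ I take that edge as the sole entrance $e_\dagger$ (so that $j^\dagger=j_0$ exactly when $\tree$'s $\root$-edge points at $j_0$); otherwise the $\root$-edge is \Wexternal and I keep it, choosing any \Wenter edge of $\tree$ as $e_\dagger$. Writing $j^\dagger$ for the head of $e_\dagger$, I set $\treep$ to consist of $e_\dagger$, the broken cycle $\cg^{(j^\dagger)}$, and every edge of $\tree$ whose head lies outside $\cg$; by construction $\treep$ enters $\cg$ once and satisfies \depRC. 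For the weight I compare the two trees head-by-head over $\cg\setminus\{j^\dagger\}$: the edge $\tree$ sends into such a node $k$ is \Wenter or internal, hence absent from $G'$ only if it is a $\root\to j_0$ edge, which forces $k=j_0$ together with $\tree$'s $\root$-edge pointing at $j_0$ — but in exactly that situation my choice has set $j^\dagger=j_0$, excluding $k$. So $\tree$'s edge into every $k\in\cg\setminus\{j^\dagger\}$ survives in $G'$, and since $\cg$ is \critical in $G'$ its cycle edge into $k$ is the maximum-weight incoming edge of $k$ there. Summing these inequalities, $\treecost{\cg^{(j^\dagger)}}$ dominates the total weight $\tree$ places on $\cg\setminus\{j^\dagger\}$, whence $\treecost{\treep}\ge\treecost{\tree}$.

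With $\treep\in\arbsone{}{G}$ entering $\cg$ once, \cref{prop:GtoContract} produces $\tree_{\cg}\in\arbsone{}{\contract{G}{\cg}}$ (the $\prov$-bijection of the first paragraph keeps it in $\arbsone{}{\contract{G}{\cg}}$) with $\treecost{\tree_{\cg}}=\treecost{\treep}\ge\treecost{\tree}$, and optimality of $\bestDep{(\contract{G}{\cg})}$ closes the chain to $\treecost{\tree}\le\treecost{T}$. As this holds for all $\tree\in\arbsone{}{G}$ and $T\in\arbsone{}{G}$, we get $\bestDep{G}=T$, which is the claim. I expect the only genuinely delicate point to be that last entrance choice: verifying simultaneously that it preserves \depRC and that it parks the single ``unsafe'' node $j_0$ at the entrance, so that the head-by-head weight comparison (which would otherwise fail exactly at $j_0$) goes through.
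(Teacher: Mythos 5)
Your route is genuinely different from the paper's: the paper constructs $\tree=\stitch{\bestDep{(\contract{G}{\cg})}}{\cg^{(j)}}$, observes it lies in $\arbsone{}{G}$, and then argues optimality by a short case split on whether the contracted image of the deleted edge $e'$ belongs to $\bestDep{(\contract{G}{\cg})}$; you instead prove domination directly, rerouting an arbitrary competitor $\tree\in\arbsone{}{G}$ through the cycle. Your diagnosis of the real subtlety is correct and is in fact treated more explicitly than in the paper: since $\cg$ is a \critical only in $G'$, the cycle edge into $j_0$ need not dominate $\tree$'s edge into $j_0$ in $G$, and forcing $j^\dagger=j_0$ exactly when $\tree$'s root edge points at $j_0$ is the right repair. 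Your head-by-head weight comparison over $\cg\setminus\{j^\dagger\}$ and the transfer of \depRC through $\prov$ are both sound.

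There is, however, a genuine gap in the enter-once reduction: you check that $\treep$ enters $\cg$ once and satisfies \depRC, but never that it satisfies \arbNoCycles, and with ``any \Wenter edge of $\tree$'' as $e_\dagger$ it can fail to. Concretely, suppose $\tree$'s root edge is external, $\cg$ is the cycle on $\{a,b\}$, and $\tree$ contains $\edge{\root}{}{y}$, $\edge{y}{}{a}$, $\edge{a}{}{x}$, $\edge{x}{}{b}$. Choosing $e_\dagger=\edge{x}{}{b}$ gives $j^\dagger=b$, so $\treep$ contains $\edge{x}{}{b}$, the broken-cycle edge $\edge{b}{}{a}$, and the retained edge $\edge{a}{}{x}$---a directed cycle, so $\treep\notin\arbs{}{G}$. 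The failure mode is precisely that the tail of $e_\dagger$ is a $\tree$-descendant of $\cg$. The fix is cheap and compatible with your bookkeeping: take $e_\dagger$ to be an enter edge of $\tree$ whose tail is reachable from $\root$ in $\tree$ without passing through $\cg$ (the ``first entry'' into $\cg$ along root-paths; such an edge always exists). When $\tree$'s root edge points into $\cg$ this is automatically that root edge, so $j^\dagger=j_0$ still holds in the one case where your weight comparison needs it, and acyclicity of $\treep$ now follows because the $\root$-to-$e_\dagger$ path is preserved and no edge of $\treep$ leads from $\cg$ back onto it. (The paper's own proof of \cref{lemma:enter} is equally silent about acyclicity of its rerouted tree, but since you are building a fresh variant of that construction you need to supply this step.)
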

\begin{proof}
Let $e'=\edge{\root}{}{i}$ and $e\in\contract{G}{\cg}$ such that $\prov(e)=e'$. We know that $e$ always exists as $e'$ emanates from the root.
By \cref{cor:ContracttoG}, we know that $\tree=\stitch{\bestDep{(\contract{G}{\cg})}}{\cg^{(j)}}\in\arbs{}{G}$ where $j$ is the \entersite of $\bestDep{(\contract{G}{\cg})}$.
Furthermore, As $\cg$ has no edges emanating from the root, $\tree\in\arbsone{}{G}$.
There are two cases to consider:

\emph{Case 1} ($e\in\bestDep{(\contract{G}{\cg})}$):
As $\cg^{(j)}$ is a subgraph of $\greedy{G}{}$, $\tree$ must have the highest weight in $\arbsone{}{G}$, so $\bestDep{G}=\tree$.

\emph{Case 2} ($e\notin\bestDep{(\contract{G}{\cg})}$):
Then $e'$ cannot be in $\bestDep{G}$, and the edge pointing to $i$ in $\cg$ is the next best possible edge incoming to $j$.
Therefore, whichever way we break $\cg$ in $\tree$, we will get a set of edges with maximal weight and so $\bestDep{G}=\tree$.
\end{proof}

\begin{lemma}\label{lemma:root}
For any graph $G$ with $\mst{G}{}=\greedy{G}{}$, let $\sigma$ be the set of outgoing edges from $\root$ in $\greedy{G}{}$.
If $\abs{\sigma}\!>\!1$, let $G'\!=\!\multidelete{G}{e'}$ for $e'\in\sigma$ that maximizes $\treecost{\greedy{G'}{}}$.
Either $\bestDep{G}\!=\!\bestDep{G'}$ or there exists a \critical $\cg$ in $G'$ such that $\bestDep{G}\!=\!\stitch{\bestDep{(\contract{G}{\cg})}}{\cg^{(j)}}$ where $j$ is the \entersite of $\bestDep{(\contract{G}{\cg})}$.
\end{lemma}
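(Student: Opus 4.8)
The plan is to match \cref{lemma:root} to its two advertised cases and dispatch them separately. The \emph{reduction case} is already available: if $G'$ contains a \critical $\cg$, then \cref{lemma:root-reduce} is exactly the desired conclusion, $\bestDep{G}=\stitch{\bestDep{(\contract{G}{\cg})}}{\cg^{(j)}}$ with $j$ the \entersite of $\bestDep{(\contract{G}{\cg})}$. Hence the only genuinely new work is the \emph{optimization case}: assuming $G'$ has no \critical, show $\bestDep{G}=\bestDep{G'}$ (equivalently, that $\bestDep{G'}$ is an optimal dependency tree of $G$).

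For that case, one inequality is free. Because $G'=\multidelete{G}{e'}$ is a subgraph of $G$, we have $\arbsone{}{G'}\subseteq\arbsone{}{G}$ and therefore $\treecost{\bestDep{G'}}\le\treecost{\bestDep{G}}$. The content is the reverse inequality, which I would obtain by proving that some optimal dependency tree of $G$ avoids $e'$: any dependency tree of $G$ that avoids every $\root$-edge into the head of $e'$ is a dependency tree of $G'$, so exhibiting one of weight $\ge\treecost{\bestDep{G}}$ forces $\treecost{\bestDep{G}}\le\treecost{\bestDep{G'}}$ and closes the equality.

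The mechanism I would use is an exchange justified by the greedy choice of $e'$. Writing $e'=\edge{\root}{}{i}$, the quantity $\treecost{\greedy{\multidelete{G}{e}}{}}$ falls below $\treecost{\greedy{G}{}}$ by the \emph{gradient} of $e$ --- the gap between the weight of $e$ and the best non-$\root$ incoming edge of its head --- so maximizing $\treecost{\greedy{G'}{}}$ picks the root edge of \emph{smallest} gradient, and the optimization hypothesis says reattaching $i$ by that best non-$\root$ alternative keeps $\greedy{G}{}$ acyclic. Now suppose an optimal dependency tree uses $e'$; then $e'$ is its unique root edge, so every node is a descendant of $i$. Choosing another edge $e^\star\in\sigma$ and performing the swap ``cut $e'$ and reattach $i$ to its best non-$\root$ head; cut the current incoming edge of $e^\star$'s head and install $e^\star$'' yields a dependency tree whose weight, by a routine comparison using $\mathrm{gradient}(e^\star)\ge\mathrm{gradient}(e')$, does not decrease, giving an optimal dependency tree without $e'$.

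The hard part will be guaranteeing that this swap stays acyclic. Unlike in \cref{cor:ContracttoG}, the competitor need not coincide with $\greedy{G}{}$, so the optimization-case acyclicity (a statement about $\greedy{G}{}$) does not transfer verbatim to the competitor's tree. I expect to resolve this in one of two ways: first reduce to an optimal dependency tree that agrees with $\greedy{G}{}$ on every node that is not a head of an edge in $\sigma$ (an independent exchange, since those nodes take globally best edges), after which the reattachment of $i$ is cycle-free by the optimization hypothesis; or, directly, choose $e^\star$ along the tree path so that reattaching $i$ routes it to $\root$ through $e^\star$'s head rather than back through $i$. Either route reduces the remaining obligations to the weight bookkeeping already handled above.
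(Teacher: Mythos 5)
Your decomposition matches the paper's: the reduction case is delegated entirely to \cref{lemma:root-reduce}, and the genuinely new work is the optimization case. But there your proposal has a real gap, which you name yourself and do not close: the exchange is never shown to produce an acyclic structure. Writing $e'=\edge{\root}{}{i}$, if an optimal dependency tree $\tree$ uses $e'$, then after cutting $e'$ every node hangs below $i$, and reattaching $i$ to the tail $h$ of its best non-root incoming edge creates a cycle exactly when the newly installed $e^\star=\edge{\root}{}{k}$ fails to re-route $h$ to $\root$, i.e.\ when $k$ does not lie on the $\tree$-path from $h$ up to $i$. Neither escape hatch closes this: there need not be any head of an edge of $\sigma$ on that path, and the claim that some optimal dependency tree agrees with $\greedy{G}{}$ on every non-$\sigma$-head is itself an exchange with the same acyclicity obstruction (a node's globally best incoming edge can likewise have its tail inside that node's subtree), so it is essentially as hard as the lemma. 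The gradient bookkeeping is correct; the combinatorial half of the exchange is what is missing.

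The way out is to never modify the competitor tree. Every $\tree\in\arbsone{}{G}$ has exactly one root edge, so since $\abs{\sigma}\ge 2$ it omits all root edges into the head of at least one $e\in\sigma$; hence $\tree\in\arbsone{}{\multidelete{G}{e}}$ and $\treecost{\tree}\le\treecost{\greedy{\multidelete{G}{e}}{}}\le\treecost{\greedy{G'}{}}$ by the greedy choice of $e'$. In the optimization case with $\abs{\sigma}=2$, the graph $G'$ has no \critical, so $\greedy{G'}{}$ has a single root edge and is itself a dependency tree of $G$; the bound is therefore attained and $\bestDep{G}=\bestDep{G'}$, with no exchange and no acyclicity argument needed. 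This is the argument the paper's own (terse) proof gestures at; the paper then handles $\abs{\sigma}>2$ by induction on $\abs{\sigma}$, recursing on $G'$ whose greedy graph has one fewer root edge, rather than by a single global swap.
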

\begin{proof}
Let $j$ be the \entersite of $\bestDep{(\contract{G}{\cg})}$.
Proof by induction on $r = \abs{\sigma}$.

\emph{Base case} ($r=2$): If $G'$ does not contain a \critical, then clearly $\bestDep{G'}=\mst{G'}{}$. Since we choose $e'$ to maximize $\greedy{G'}{}$ and $G'$ is a subgraph of $G$, $\bestDep{G}=\bestDep{G'}$.
Otherwise, $G'$ has a \critical $\cg$.
Then by \cref{lemma:root-reduce}, $\bestDep{G}=\stitch{\bestDep{(\contract{G}{\cg})}}{\cg^{(j)}}$ .

\emph{Inductive case} ($r>2$):
Let $\sigma'$ be the set of outgoing edge from $\root$ in $\greedy{G'}{}$.
Then clearly $\abs{\sigma'}=r\!-\!1>1$.
If $G'$ does not contain a \critical, then $\mst{G'}{}=\greedy{G'}{}$ and we satisfy the induction hypothesis.
Otherwise, $G'$ has a \critical $\cg$.
Then by \cref{lemma:root-reduce}, $\bestDep{G}=\stitch{\bestDep{(\contract{G}{\cg})}}{\cg^{(j)}}$.
\end{proof}

\begin{customthm}{2}
For any graph $G$ with $\mst{G}{}\!=\!\greedy{G}{}$, let $\sigma$ be the set of outgoing edges from $\root$ in $\mst{G}{}$.
If $\abs{\sigma}\!=\!1$, then $\bestDep{G}\!=\!\mst{G}{}$,
otherwise if $G'\!=\!\multidelete{G}{e'}$ for $e'\in\sigma$ that maximizes $\treecost{\greedy{G'}{}}$,
then either $\bestDep{G}\!=\!\bestDep{G'}$ or there exists a \critical $\cg$ in $G'$ such that $\bestDep{G}\!=\!\stitch{\bestDep{(\contract{G}{\cg})}}{\cg^{(j)}}$ where $j$ is the \entersite of $\bestDep{(\contract{G}{\cg})}$.
\end{customthm}
\begin{proof} There are two cases to consider.

\emph{Case 1} ($\abs{\sigma}=1$): Then $\mst{G}{}$ has one edge emanating from the root so clearly $\bestDep{G}=\mst{G}{}$.

\emph{Case 2} ($\abs{\sigma}> 1$). This is immediate from \cref{lemma:root}.
\end{proof}

\newpage

\section{Decoding UD Treebanks}\label{app:table}

\begin{table}[h!]
    \centering
    \resizebox{!}{0.46\textheight}{
    \begin{tabular}{lccccc}
          \bf Language & \bf $\abs{\text{Train}}$ & \bf $\abs{\text{Test}}$ & \bf Malformed Rate & \bf Rel. $\Delta$ UAS & \bf Rel. $\Delta$ Exact Match \\ \midrule
Czech & 68495 & 10148 & 0.45\% & $\ $0.000\% & 0.052\% \\
Russian & 48814 & 6491 & 0.49\% & $\ $0.000\% & 0.027\% \\
Estonian & 24633 & 3214 & 0.93\% & $\ $0.000\% & 0.448\% \\
Korean & 23010 & 2287 & 0.96\% & $\ $0.008\% & 0.366\% \\
Latin & 16809 & 2101 & 0.52\% & $\ $0.018\% & 0.151\% \\
Norwegian & 15696 & 1939 & 0.52\% & -0.014\% & 0.000\% \\
Ancient Greek & 15014 & 1047 & 0.57\% & $\ $0.026\% & 0.186\% \\
French & 14450 & 416 & 1.68\% & -0.021\% & 0.546\% \\
Spanish & 14305 & 1721 & 0.17\% & $\ $0.002\% & 0.000\% \\
Old French & 13909 & 1927 & 0.52\% & $\ $0.031\% & 0.145\% \\
German & 13814 & 977 & 1.54\% & $\ $0.040\% & 0.495\% \\
Polish & 13774 & 1727 & 0.00\% & $\ $0.000\% & 0.000\% \\
Hindi & 13304 & 1684 & 0.18\% & -0.009\% & 0.000\% \\
Catalan & 13123 & 1846 & 0.54\% & $\ $0.002\% & 0.000\% \\
Italian & 13121 & 482 & 0.21\% & -0.010\% & 0.000\% \\
English & 12543 & 2077 & 0.48\% & $\ $0.004\% & 0.217\% \\
Dutch & 12264 & 596 & 0.67\% & $\ $0.039\% & 0.000\% \\
Finnish & 12217 & 1555 & 0.39\% & -0.010\% & 0.000\% \\
Classical Chinese & 11004 & 2073 & 0.96\% & -0.010\% & 0.304\% \\
Latvian & 10156 & 1823 & 0.88\% & -0.012\% & 0.000\% \\
Bulgarian & 8907 & 1116 & 0.27\% & $\ $0.000\% & 0.000\% \\
Slovak & 8483 & 1061 & 0.38\% & $\ $0.008\% & 0.000\% \\
Portuguese & 8328 & 477 & 0.42\% & $\ $0.000\% & 0.000\% \\
Romanian & 8043 & 729 & 0.41\% & $\ $0.000\% & 0.000\% \\
Japanese & 7125 & 550 & 0.00\% & $\ $0.000\% & 0.000\% \\
Croatian & 6914 & 1136 & 0.88\% & $\ $0.027\% & 0.000\% \\
Slovenian & 6478 & 788 & 0.38\% & -0.022\% & 0.000\% \\
Arabic & 6075 & 680 & 0.29\% & $\ $0.004\% & 0.000\% \\
Ukrainian & 5496 & 892 & 0.90\% & $\ $0.032\% & 0.000\% \\
Basque & 5396 & 1799 & 0.67\% & $\ $0.018\% & 0.000\% \\
Hebrew & 5241 & 491 & 1.02\% & $\ $0.009\% & 0.556\% \\
Persian & 4798 & 600 & 0.67\% & -0.007\% & 0.000\% \\
Indonesian & 4477 & 557 & 1.26\% & -0.029\% & 0.000\% \\
Danish & 4383 & 565 & 0.53\% & -0.011\% & 0.000\% \\
Swedish & 4303 & 1219 & 1.23\% & $\ $0.021\% & 0.988\% \\
Old Church Slavonic & 4124 & 1141 & 1.05\% & $\ $0.000\% & 0.128\% \\
Urdu & 4043 & 535 & 1.12\% & -0.029\% & 0.000\% \\
Chinese & 3997 & 500 & 1.80\% & -0.020\% & 0.000\% \\
Turkish & 3664 & 983 & 2.54\% & $\ $0.080\% & 0.513\% \\
Gothic & 3387 & 1029 & 0.78\% & $\ $0.011\% & 0.000\% \\
Serbian & 3328 & 520 & 0.19\% & $\ $0.009\% & 0.446\% \\
Galician & 2272 & 861 & 1.16\% & $\ $0.011\% & 1.282\% \\
North Sami & 2257 & 865 & 1.27\% & $\ $0.000\% & 0.230\% \\
Armenian & 1975 & 278 & 0.00\% & $\ $0.000\% & 0.000\% \\
Greek & 1662 & 456 & 0.44\% & $\ $0.020\% & 0.565\% \\
Uyghur & 1656 & 900 & 0.56\% & $\ $0.024\% & 0.309\% \\
Vietnamese & 1400 & 800 & 3.38\% & -0.076\% & 0.000\% \\
Afrikaans & 1315 & 425 & 6.35\% & $\ $0.011\% & 1.460\% \\
Wolof & 1188 & 470 & 1.49\% & -0.021\% & 0.625\% \\
Maltese & 1123 & 518 & 0.58\% & -0.010\% & 0.000\% \\
Telugu & 1051 & 146 & 0.00\% & $\ $0.000\% & 0.000\% \\
Scottish Gaelic & 1015 & 536 & 0.75\% & -0.024\% & 0.000\% \\
Hungarian & 910 & 449 & 4.23\% & $\ $0.022\% & 0.000\% \\
Irish & 858 & 454 & 2.42\% & $\ $0.000\% & 0.000\% \\
Tamil & 400 & 120 & 0.00\% & $\ $0.000\% & 0.000\% \\
Marathi & 373 & 47 & 2.13\% & $\ $0.000\% & 0.000\% \\
Belarusian & 319 & 253 & 0.79\% & $\ $0.024\% & 0.000\% \\
Lithuanian & 153 & 55 & 7.27\% & -0.317\% & 0.000\% \\
Kazakh & 31 & 1047 & 2.58\% & -0.016\% & 3.226\% \\
Upper Sorbian & 23 & 623 & 6.42\% & $\ $0.178\% & 2.439\% \\
Kurmanji & 20 & 734 & 23.57\% & $\ $0.405\% & 22.222\% \\
Buryat & 19 & 908 & 6.61\% & $\ $0.107\% & 4.082\% \\
Livvi & 19 & 106 & 12.26\% & $\ $0.000\% & 0.000\% \\
\end{tabular}
}
    \caption{Accompanying table for \cref{sec:experiment}}
    \label{tab:full_results}
\end{table}

\end{document}